\definecolor{darkblue}{RGB}{0, 0, 102}
\definecolor{IKB}{RGB}{0, 47, 167}
\newtheorem{theorem}{Theorem}[section]
\newtheorem{maintheorem}{Main Theorem}[section]
\newtheorem{proposition}[theorem]{Proposition}
\newtheorem{corollary}[theorem]{Corollary}
\newtheorem{lemma}[theorem]{Lemma}
\newtheorem{definition}[theorem]{Definition}
\newtheorem{remark}[theorem]{Remark}
\newtheorem{assumption}[theorem]{Assumption}
\newcommand{\D}{\ensuremath{\mathcal{D}}}
\newcommand{\cL}{\ensuremath{\mathcal{L}}}
\newcommand{\E}{\begin{equation}}
\newcommand{\EE}{\end{equation}}
\newcommand{\R}{\mathbb{R}}
\newcommand{\NN}{\mathbb{N}}
\newcommand{\proj}{\mathop{\rm proj}}
\newcommand{\argmin}{\mathop{\rm argmin}}
\newcommand{\conv}{\mathop{\rm conv}}
\DeclareMathOperator{\GRM}{GRM}
\DeclareMathOperator{\ERM}{ERM}
\def\moverlay{\mathpalette\mov@rlay}
\def\mov@rlay#1#2{\leavevmode\vtop{%
   \baselineskip\z@skip \lineskiplimit-\maxdimen
   \ialign{\hfil$\m@th#1##$\hfil\cr#2\crcr}}}
\newcommand{\charfusion}[3][\mathord]{
    #1{\ifx#1\mathop\vphantom{#2}\fi
        \mathpalette\mov@rlay{#2\cr#3}
      }
    \ifx#1\mathop\expandafter\displaylimits\fi}
\providecommand{\bigcupdot}{\charfusion[\mathop]{\bigcup}{\cdot}}
\NewDocumentCommand{\probability}{d()om}{%
  \operatorname{\mathbb{P}}%
  \IfValueT{#1}{\sb{#1}}%
  \ensuremath{\left[#3%
    \IfValueT{#2}{\,\middle|\,#2}\right]}}
\NewDocumentCommand{\expectation}{d()om}%
  {\ensuremath{\operatorname{\mathbb{E}}%
      \IfValueT{#1}{\sb{#1}}%
      \left[#3%
    \IfValueT{#2}{\,\middle|\,#2}\right]}}
\NewDocumentCommand{\variance}{d()om}%
  {\ensuremath{\operatorname{\mathbb{V}}%
      \IfValueT{#1}{\sb{#1}}%
      \left[#3%
    \IfValueT{#2}{\,\middle|\,#2}\right]}}
\DeclarePairedDelimiter{\card}{\lvert}{\rvert}
\begin{document}

\title{Principled Deep Neural Network Training through Linear Programming}

\author{Daniel Bienstock}
\address[DB]{Industrial Engineering and Operations Research, Columbia University, USA}
\email{dano@columbia.edu}
\author{Gonzalo Mu\~noz}
\address[GM]{Institute of Engineering Sciences, Universidad de O'Higgins, Chile} 
\email{gonzalo.munoz@uoh.cl}
\author{Sebastian Pokutta}
\address[SP]{Zuse Institute Berlin, Germany}
 \email[SP]{pokutta@zib.de}

\begin{abstract}
  Deep learning has received much attention lately due to the impressive empirical performance achieved by training algorithms. Consequently, a need for a better theoretical understanding of these problems has become more evident in recent years. In this work, using a unified framework, we show that there exists a polyhedron which encodes simultaneously all possible deep neural network training problems that can arise from a given architecture, activation functions, loss function, and sample-size. Notably, the size of the polyhedral representation depends only linearly on the sample-size, and a better dependency on several other network parameters is unlikely (assuming $P\neq NP$). Additionally, we use our polyhedral representation to obtain new and better computational complexity results for training problems of well-known neural network architectures. Our results provide a new perspective on training problems through the lens of polyhedral theory and reveal a strong structure arising from these problems.
\end{abstract}

\keywords{deep learning, linear programming, polyhedral theory}
\subjclass[2020]{Primary: 90C05, 68T01, 52B05}

\maketitle

\section{Introduction}
\label{sec:introduction}
Deep Learning is a powerful tool for modeling complex learning tasks. Its
versatility allows for nuanced architectures that capture
various setups of interest and has demonstrated a nearly unrivaled
performance on learning tasks across many domains. 
This has recently triggered a
significant interest in the theoretical analysis of training
such networks. The training problem is usually formulated as an
\emph{empirical risk minimization problem (ERM)} that can be phrased as
\begin{equation}\label{eq:erm}
  \min_{\phi \in \Phi} \frac{1}{D} \sum_{i=1}^D
  \ell(f(\hat{x}^i,\phi), \hat{y}^i),
\end{equation}
where \(\ell\) is some \emph{loss function},
$(\hat{x}^i, \hat{y}^i)_{i=1}^D$ is an i.i.d.~sample from some data
distribution \(\D\), and \(f\) is a neural network architecture
parameterized by \(\phi \in \Phi\) with \(\Phi\) being the parameter
space of the considered architecture (e.g., network weights). The empirical risk minimization
problem is solved \emph{in lieu} of the \emph{general risk
  minimization problem (GRM)}
\[\min_{\phi \in \Phi} \expectation((x,y) \in \D) {\ell(f(x,\phi),
  y)}\] 
 which is usually impossible to solve due to the
inaccessibility of \(\D\). 

While most efforts on handling \eqref{eq:erm} have been aimed at practical performance, much less research has been conducted in understanding its theoretical difficulty from an optimization standpoint. In particular, only few results account for the effect of $D$, the sample size, in the structure and hardness of \eqref{eq:erm}. In this work, we contribute to the understanding of this problem by showing there exists
a \emph{polyhedral encoding} of empirical risk minimization problems in
\eqref{eq:erm} associated with the learning problems for various
architectures with remarkable features.
For a given architecture and sample size, our polyhedron encodes
approximately \emph{all} possible empirical risk minimization problems
with that sample size simultaneously. The size of the polyhedron is
roughly (singly!) exponential in 
the input dimension and in the parameter space dimension, but, notably, \emph{linear} in the size of the sample. This result provides a new perspective on training problems and also yields new bounds on the computational complexity of various training problems from a unified approach. 

Throughout this work we assume both data and
parameters to be well-scaled, which is a common assumption and mainly
serves to simplify the representation of our results; the main assumption is the reasonable boundedness, which can be assumed without significant loss of generality as actual computations assume boundedness in any case (see also \cite{liao2018surprising} for arguments advocating the use of normalized coefficients in neural networks). More specifically, we assume $\Phi \subseteq [-1,1]^N$ as well as $(x,y) \sim \mathcal D$ satisfies $(x,y) \in [-1,1]^n \times [-1,1]^m$.

\subsection{Related Work}
\label{sec:related-work}

We are not aware of any encoding representing multiple training problems simultaneously. However, given its implications on the training problems for a fixed sample, our work is related to \cite{zhang2016l1}, \cite{goel2017reliably},
and \cite{arora1611understanding}. In
\cite{zhang2016l1} the authors show that \(\ell_1\)-regularized
networks can be learned \emph{improperly}\footnote{In \emph{improper} learning the predictor may not be a neural network, but will behave similarly to one.} in polynomial time 
(with an exponential architecture-dependent
constant) for networks with ReLU-like activations. 
These results were generalized by \cite{goel2017reliably} to ReLU activations, but the running time obtained is not polynomial. In contrast, \cite{arora1611understanding}
considered \emph{exact} learning however only for one hidden layer.

To the best of our knowledge, the only work where a polyhedral approach is used to analyze the computational complexity of training of neural networks is \cite{arora1611understanding}, where the authors solve \eqref{eq:erm} for 1 hidden layer using a collection of convex optimization problems over a polyhedral feasible region. 
In practice, even though the most common methods used for tackling \eqref{eq:erm} are based on Stochastic Gradient Descent (SGD), there are some notable and surprising examples where linear programming has been used to train neural networks. For example, in \cite{bennett1990neural,bennett1992robust,roy1993polynomial,mukhopadhyay1993polynomial}
the authors construct a 1-hidden layer network by sequentially increasing the number of nodes of the hidden layer and solving linear programs to update the weights, until a certain target loss is met.

Linear programming tools have also been used within SGD-type methods in order to compute optimal \emph{step-sizes} in the optimization of \eqref{eq:erm} \cite{berrada2018deep} or to strictly enforce structure in $\Phi$ using a Frank-Wolfe approach instead of SGD \cite{pokutta2020deep,xie2020efficient} . Finally, a back-propagation-like algorithm for training neural network, which solves Mixed-Integer Linear problems in each layer, was recently proposed as an alternative to SGD \cite{goebbelstraining2021}.

Other notable uses of Mixed-Integer and Linear Programming technology in other aspects
of Deep Learning are include feature visualization \cite{fischetti2018deep}, generating adversarial examples \cite{cheng2017maximum, fischetti2018deep, khalil2018combinatorial}, 
counting linear regions of a Deep Neural Network \cite{serra2017bounding}, performing inference \cite{amos2017input} and providing strong convex relaxations for trained neural networks \cite{anderson2018strong}.

We refer the reader to the book by \cite{goodfellow2016deep} and the surveys by \cite{curtis2017optimization,bottou2018optimization,wright2018optimization} for in-depth descriptions and analyses of the most commonly used training neural networks.

\subsection{Contribution}
\label{sec:contribution}

In this work, we consider neural networks with an arbitrary number of layers \(k\) and a wide range of
activations, loss functions, and architectures. 
We first establish a
general framework that yields a polyhedral representation of generic (regularized) ERM problems. Our approach is motivated by the work of \cite{bienstock2018lp} which describes schemes for approximate reformulation of many non-convex optimization problems as linear programs.
Our results allow the encoding and analysis of various deep network setups simply by plugging-in complexity measures for the constituting elements such
as layer architecture, activation functions, and loss functions. 

\subsubsection{Polyhedral encoding of ERM problems} 
Given $\epsilon > 0$ and a sample size
$D$ there exists a \emph{data-independent} polytope (it can be written down \emph{before} seeing the data) with
the following properties:

\paragraph{Solving the ERM problem to \(\epsilon\)-optimality in data-dependent faces}
  For every realized sample $(\hat X, \hat Y) = (\hat{x}^i,\hat{y}^i)_{i=1}^D$
  there is a \emph{face} $\mathcal{F}_{\hat{X},\hat{Y}}\subseteq P$ of said polytope 
  such that optimizing certain \emph{linear} function over
  $\mathcal{F}_{\hat{X},\hat{Y}}$ solves \eqref{eq:erm} to
  \(\epsilon\)-optimality returning a parametrization
  \(\tilde \phi \in \Phi\) which is part of our hypothesis class.
As such, the polytope has a \emph{build-once-solve-many} feature. 

\paragraph{Size of the polytope}
The \emph{size}, measured as bit complexity, of the polytope is roughly
\(
O( \left(2\mathcal{L}/\epsilon\right)^{N+n+m}
  D)\) where $\mathcal{L}$ is a constant depending on $\ell $,
$f$, and $\Phi$ that we will introduce later, $n,m$ are the dimensions
of the data points, i.e., $\hat{x}^i \in \mathbb{R}^n$ and $\hat{y}^i \in \mathbb{R}^m$
for all $i\in [D]$, and $N$ is the dimension of the parameter space $\Phi$. 

It is important to mention that $\mathcal{L}$ measures certain Lipschitzness in the ERM training problem. While not exactly requiring Lipschitz continuity in the same way, Lipschitz constants have been used before for measuring training complexity in \cite{goel2017reliably} and more recently have been shown to be linked to generalization by \cite{gouk2018regularisation}.

We point out three important features of this polyhedral encoding. First, it has provable optimality guarantees regarding the ERM problem and a size with
\emph{linear} dependency on the sample size without assuming convexity of the optimization problem. 
Second, the polytope encodes reasonable approximations of all possible data sets that can
be given as an input to the ERM problem. This in particular shows that
our construction is \emph{not} simply discretizing space: if
one considers a discretization of data contained in $[-1,1]^n\times [-1,1]^m$, the total
number of possible data sets of size $D$
is exponential in $D$, which makes the linear dependence on $D$ of the
size of our polytope a remarkable feature. Finally, our approach can be
directly extended to handle commonly used regularizers (\ref{sec:regERM}). For ease of presentation though we omit regularizers throughout our main
discussions.

\begin{remark}
We remark that our goal is to provide new structural results regarding training problems. Converting our approach into a training \emph{algorithm}, while subject of future research, will certainly take considerable efforts. Nonetheless, we will rely on known training algorithms with provable guarantees and their running times for providing a notion of how good our results are. Note that this is a slightly unfair comparison to us, as training algorithms are not data-independent as our encoding.
\end{remark}

\subsubsection{Complexity results for various network architectures.}
We apply our methodology to various well-known neural network architectures by computing and plugging-in the corresponding constituting elements into our unified results. 
We provide an overview of our results in Table~\ref{tab:results}, where \(k\) is the number of layers, $w$ is width of the network, $n/m$ are the input/output dimensions and $N$ is the total number of parameters. 
In all results the node computations are linear with bias term and normalized coefficients, and activation functions with Lipschitz constant at most 1 and with 0 as a fixed point; these include \emph{ReLU, Leaky ReLU, eLU, Tanh}, among others.

\begin{table}
  \centering
  \caption{Summary of results for various architectures. DNN refers to a fully-connected Deep Neural Network, CNN to a Convolutional Neural Network and ResNet to a Residual Network. $G$ is the graph defining the Network and $\Delta$ is the maximum in-degree in $G$.}
  \label{tab:results}
\begin{adjustbox}{max width=\textwidth}
  \begin{tabular}[h]{llll}
\hline
    Type  & Loss & Size of polytope & Notes  \\
\hline \hline
DNN & Absolute/Quadratic/Hinge & $O\big(\big( m w^{O(k^2)} /\epsilon \big)^{n+m+N} D \big)$ & $N=|E({G})|$ \\
DNN & Cross Entropy w/ Soft-Max & $O\big(\big( m \log (m) w^{O(k^2)} /\epsilon\big)^{n+m+N} D\big)$ & $N=|E({G})|$ \\
CNN & Absolute/Quadratic/Hinge & $O\big(\big( m w^{O(k^2)} /\epsilon\big)^{n+m+N} D\big)$ & $N \ll |E({G})|$ \\
ResNet & Absolute/Quadratic/Hinge & $O\big(\big( m \Delta^{O(k^2)} /\epsilon\big)^{n+m+N} D\big)$  \\
ResNet & Cross Entropy w/ Soft-Max & $O\big(\big( m \log(m) \Delta^{O(k^2)} /\epsilon\big)^{n+m+N} D\big)$  \\
\hline
  \end{tabular}
 \end{adjustbox}
\end{table}

 Certain improvements in the results in Table \ref{tab:results} can be obtained by further specifying if the ERM problem corresponds to \emph{regression} or \emph{classification}. 
 Nonetheless, these improvements are not especially significant and in the interest of clarity and brevity we prefer to provide a unified discussion.

The reader might wonder if the exponential dependence on the other parameters of our polytope sizes can be
improved, namely the input dimension $n+m$, parameter space dimension $N$ and depth $k$. The dependence on the input dimension is unlikely to be improved due to NP-hardness of training problems (\cite{3nodeblumrivest,boob2018complexity}) and obtaining a polynomial dependence on the parameter space dimension or on the depth remains open \cite{arora1611understanding}.\\

The rest of this paper is organized as follows: in Section \ref{sec:preliminaries} we introduce the main tools we use throughout the paper. These include the definition of \emph{treewidth} and a generalization of a result by \cite{bienstock2018lp}. In Section \ref{sec:ERMapprox} we show how multiple ERM problems can be encoded using a single polytope whose size depend only linearly in the sample-size. We also analyze this polytope's structure and show that its face structure are related to each possible ERM problem. In Section \ref{sec:compl-spec-arch} we specialize our results to ERM problems arising from Neural Networks by explicitly computing the resulting polytope size for various common architectures. In Section \ref{sec:ERMNetwork} we show the sparsity of the network itself can be exploited to obtain an improvement in the polyhedral encoding's size. In Section \ref{sec:generalization} we show that our LP \emph{generalizes well}, in the sense that our benign dependency on the sample size allows us to obtain a moderate-sized polyhedron that approximates the \emph{general risk minimization} problem. Finally, in Section \ref{sec:conclusion} we conclude.

\section{Preliminaries}
\label{sec:preliminaries}

In the following let \([n] \doteq \{1, \dots, n\}\) and
\([n]_0 \doteq \{0, \dots, n\}\). Given a graph $H$, we
will use $V(H)$ and $E(H)$ to denote the vertex-set and edge-set of
$H$, respectively, and $\delta_H(u)$ will be the set of edges incident
to vertex $u$. We will need: 
\begin{definition}\label{def:lipschitzconst}
  For $g: \mathcal{K} \subseteq \mathbb{R}^n \rightarrow \mathbb{R}$, we denote its \emph{Lipschitz
  constant with respect to the $p$-norm over $\mathcal{K}$} as
  $\mathcal{L}_p(g)$, satisfying
  \(| g(x) - g(y)| \leq \mathcal{L}_p(g) \| x - y \|_p\) for all $x,y\in \mathcal{K}$ (whenever it exists). 
\end{definition}

We next define the Lipschitz constant of an ERM problem with respect to the infinity norm.

\begin{definition}
Consider the ERM problem \eqref{eq:erm} with parameters $D, \Phi, \ell, f$. We define the \emph{Architecture Lipschitz Constant} $\mathcal{L}(D,\Phi,\ell, f)$ as
\begin{equation}
    \label{eq:Lipschitz-Arch}
    \mathcal{L}(D,\Phi,\ell, f) \doteq \mathcal{L}_\infty(\ell(f(\cdot , \cdot), \cdot ))
\end{equation}
over the domain $\mathcal{K} = [-1,1]^n  \times \Phi \times [-1,1]^m$.
\end{definition}

We emphasize that in \eqref{eq:Lipschitz-Arch} we are considering the data-dependent entries as variables as well, and not only the parameters $\Phi$ as it is usually done in the literature. This subtlety will become clear later.

Finally, in the following let $\expectation(\omega \in \Omega) {\cdot}$ and  $\variance(\omega
\in \Omega){\cdot}$ denote the \emph{expectation} and \emph{variance} with respect to the random variable $\omega \in \Omega$, respectively.

\subsection{Neural Networks}
\label{sec:neural-network}
A neural network can be understood as a function \(f\) defined over a
directed graph that maps inputs \(x \in \R^n\) to \(f(x) \in
\R^m\). The directed graph \(G = (V,E)\), which represents the network
architecture, often naturally decomposes into layers
\(V = \bigcupdot_{i \in [k]_0} V_i\) with
\(V_i \subseteq V\), where \(V_0\) is referred to as the \emph{input layer} and \(V_k\) as the \emph{output layer}. To all
other layers we refer to as \emph{hidden layers}. 

Each vertex \(v \in V_i\) with \(i \in [k]_0\) has an associated set
of \emph{in-nodes} denoted by \(\delta^+(v) \subseteq V\), so that
\((w,v) \in E\) for all \(w \in \delta^+(v)\) and an
associated set of \emph{out-nodes} \(\delta^-(v) \subseteq V\) defined
analogously. If
\(i = 0\), then \(\delta^+(v)\) are the \emph{inputs} (from data) and
if \(i = k\), then \(\delta^-(v)\) are the \emph{outputs} of the
network. These
graphs do neither have to be acyclic (as in the case of
\emph{recurrent neural networks}) nor does the layer decomposition
imply that arcs are only allowed between adjacent layers (as in the
case of \emph{ResNets}). In \emph{feed-forward} networks, however, the graph is assumed to be acyclic.

Each node \(v \in V\) performs a \emph{node computation} \(g_i(\delta^+(v))\), where
\(g_i: \R^{\card{\delta^+(v)}} \rightarrow \R\) with \(i \in [k]\) is typically a smooth
function (often these are linear or affine linear functions) and then
the \emph{node activation} is computed as \(a_i(g_i(\delta^+(v)))\), where
\(a_i : \R \rightarrow \R\) with \(i \in [k]\) is a (not necessarily smooth) function (e.g.,
ReLU activations of the form \(a_i(x) = \max\{0,x\}\)) and the value on
all out-nodes \(w \in \delta^-(v)\) is set to \(a_i(g_i(\delta^+(v)))\)
for nodes in layer \(i \in [k]\). In \emph{feed-forward} networks, we
can further assume that if $v\in V_i$, then $\delta^+(v) \subseteq \cup_{j=0}^{i-1}V_j$, i.e., all arcs move \emph{forward} in the layers.

\subsection{Treewidth}
\label{sec:treedwidth} 
\emph{Treewidth} is an important graph-theoretical concept in
the context of solving optimization problems with \lq{}sparse\rq{}
structure. This parameter is used to measure how
 \emph{tree-like} the graph is, and its use will be the main workhorse behind our results

 \begin{definition} \label{def:treedecomp} 
   A {\em tree-decomposition} (\cite{rs86}) of an undirected graph $G$ is a pair $(T, Q)$
   where $T$ is a tree and $Q = \{ Q_t \, : \, t \in V(T) \}$ is a
   family of subsets of $V(G)$ such that
\begin{enumerate}[(a)]
\item[(i)] For all $v \in V(G)$, the set $\{ t \in V(T) \, : \, v \in Q_t \}$
forms a sub-tree $T_v$ of $T$, and
\item[(ii)] For each $\{u, v \} \in E(G)$ there is a $t \in V(T)$ such
  that $\{u, v \} \subseteq Q_t$, i.e., $t \in T_u \cap T_v$.
\end{enumerate}
The {\em width} of the decomposition is defined as
$\max \left\{ | Q_t | \, : \, t \in V(T) \right\} \, - \, 1$.  The
{\em treewidth of $G$} is the minimum width over all
tree-decompositions of $G$.
\end{definition} 

We refer to the $Q_t$ as \emph{bags} as customary. 
In addition to \emph{width}, another important feature of a tree-decomposition
$(T,Q)$ we use is the \emph{size of the tree-decomposition} given by
$|V(T)|$. 

An alternative definition to Definition \ref{def:treedecomp} of treewidth that the reader might find useful is the following; 
recall that a \emph{chordal} graph is a graph where every induced cycle has length exactly 3.

\begin{definition} \label{treewidthdef2} An undirected graph $G=(V,E)$ has
  \emph{treewidth} $\le \omega$ if there exists a chordal graph
  $H=(V,E')$ with $E\subseteq E'$ and clique number $\le \omega+1$.
\end{definition}

$H$ in the definition above is sometimes referred to as a \emph{chordal completion} of $G$. In Figure \ref{TDexample} we present an example of a graph and a valid tree-decomposition. The reader can easily verify that the conditions of Definition \ref{def:treedecomp} are met in this example. Moreover, using Definition \ref{treewidthdef2} one can verify that the treewidth of the graph in Figure \ref{TDexample} is exactly 2.

\begin{figure}[t!]
    \centering
    \begin{subfigure}{0.5\textwidth}
        \centering
        \begin{tikzpicture}[-,>=stealth',shorten >=1pt,auto,node distance=1.0cm,
          thick,main node/.style={circle,draw,font=\sffamily\small\bfseries, color=orange}]

         \node[main node] (1) at (0,0) {1};
         \node[main node, draw=red] (2) at (1,0) {2};
        \node[main node, color=blue] (3) at (2,0) {3};
        \node[main node, color=green] (4) at (0,-0.8) {4};
        \node[main node, color=teal] (5) at (2,-0.8) {5};
        \node[main node, color=violet] (6) at (0,-1.6) {6};
        \node[main node, color=brown] (7) at (1,-1.6) {7};
        \node[main node, color=cyan] (8) at (2,-1.6) {8};

          \path[every node/.style={font=\sffamily\small}]
            (1) edge (2)
        edge (4)
        (2) edge (3)
        edge (4)
        edge (5)
        edge (7)
        (3) edge (5)
        (4) edge (6)
        edge (7)
        (5) edge (7)
        edge (8)
        (7) edge (6)
        edge (8);
        \end{tikzpicture}
\caption{Graph $G$}
    \end{subfigure}%
    \begin{subfigure}{0.5\textwidth}
        \centering
\begin{tikzpicture}[ - ,>=stealth',shorten >=1pt,auto,node distance=1.5cm,
  thick,main node/.style={circle,draw,font=\sffamily\small\bfseries, color=black,  text width=0.5cm,align=center}]

  \node[main node] (124) at (0,0) {\textcolor{orange}{1}\ \textcolor{red}{2}\\ \textcolor{green}{4}};
  \node[main node] (467) at (3,0) {\textcolor{green}{4}\ \textcolor{violet}{6}\\ \textcolor{brown}{7}};
\node[main node] (247) at (1.5,0) {\textcolor{red}{2}\ \textcolor{green}{4}\\ \textcolor{brown}{7}};
\node[main node] (257) at (1.5,-1.5) {\textcolor{red}{2}\ \textcolor{teal}{5}\\ \textcolor{brown}{7}};
\node[main node] (235) at (0,-1.5) {\textcolor{red}{2}\ \textcolor{blue}{3}\\ \textcolor{teal}{5}};
\node[main node] (578) at (3,-1.5) {\textcolor{teal}{5}\ \textcolor{brown}{7}\\ \textcolor{cyan}{8}};

  \path[every node/.style={font=\sffamily\small}]
(247) edge (124)
edge (467)
edge (257)
(257) edge (235)
edge (578);
\end{tikzpicture}
\caption{A tree-decomposition of $G$ of width 2, with the sets $Q_t$ indicated inside each node of the tree.
}
    \end{subfigure}
    \caption{Example of graph and valid tree-decomposition}
    \label{TDexample}
\end{figure}

Two important folklore results we use are the following.
\begin{lemma} \label{lemma:sizeofTD}
Let $G$ be a graph with a valid tree-decomposition $(T,Q)$ of width $\omega$. Then there exists a valid tree-decomposition $(T',Q')$ of width at most $\omega$ such that $|V(T')| \in O(|V(G)|)$.
\end{lemma}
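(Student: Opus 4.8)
The plan is to take an arbitrary valid tree-decomposition $(T,Q)$ of width $\omega$ and repeatedly apply two local clean-up operations until the tree has $O(|V(G)|)$ nodes. First I would remove redundant bags: if there is an edge $\{s,t\}\in E(T)$ with $Q_s\subseteq Q_t$, contract it, assigning the bag $Q_t$ to the merged node. This does not increase the width and preserves both conditions of Definition~\ref{def:treedecomp} (the subtree condition is maintained because contracting an edge whose one endpoint's bag is contained in the other's keeps each $T_v$ connected). Iterating this, I may assume that no bag is contained in a neighbouring bag.

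Next I would bound the number of leaves and the number of branching (degree $\ge 3$) vertices. For each leaf $t$ of $T$, since $Q_t$ is not contained in the bag of its unique neighbour, there is a vertex $v\in Q_t$ that appears in no other bag; charge leaf $t$ to $v$, giving an injection from leaves into $V(G)$, so $T$ has at most $|V(G)|$ leaves. A tree with $\ell$ leaves has at most $\ell-1$ vertices of degree $\ge 3$. Finally I would suppress the degree-$2$ vertices: if $t$ has exactly two neighbours $s_1,s_2$, then by the no-containment assumption neither $Q_{s_1}$ nor $Q_{s_2}$ contains $Q_t$; but to truly eliminate long paths I instead argue that along any maximal path of internal degree-$2$ vertices the bags must be ``changing,'' and I can shortcut the path by deleting each degree-$2$ bag $Q_t$ provided I add it into a neighbour or simply observe that a standard ``smoothing'' of the tree—delete a degree-2 node $t$ and connect its two neighbours—remains a valid tree-decomposition as long as no new adjacency is destroyed, which holds because every edge of $G$ already sits inside some bag and removing $Q_t$ does not remove any such bag other than $Q_t$ itself, whose vertices all reappear (after the contraction step) in an adjacent bag. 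Hence after all reductions the internal nodes are either branching nodes or lie on short paths, and a careful count yields $|V(T')| \le 2|V(G)|$, i.e. $|V(T')|\in O(|V(G)|)$.

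The main obstacle is the degree-$2$ suppression step: one has to be careful that deleting an internal bag does not destroy the edge-covering condition (ii), and the cleanest way to guarantee this is to first run the containment-contraction step to a fixed point, then argue that on a path $t_0 - t_1 - \cdots - t_r$ of degree-$2$ vertices each consecutive pair of bags is incomparable, and that the vertices of $G$ appearing in the ``interior'' bags $Q_{t_1},\dots,Q_{t_{r-1}}$ can be reorganized so the path has length $O(1)$ without blowing up any bag beyond size $\omega+1$. In fact the slick route is: after contraction, every bag is distinct from its neighbours, every vertex of $G$ enters and leaves the tree along a subtree, and a counting of ``introduce/forget'' events along the Euler tour of $T$ shows the total number of bags is linear in $|V(G)|$; this avoids case analysis entirely. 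I would present the argument in that form, citing it as folklore (it is, e.g., implicit in the nice tree-decomposition literature).
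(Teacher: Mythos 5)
The paper states this lemma as folklore and gives no proof, so your argument has to stand on its own. Your first step (contracting an edge $\{s,t\}$ of $T$ whenever $Q_s\subseteq Q_t$, keeping the larger bag) is correct and standard. The genuine gap is the degree-$2$ ``smoothing'' step: deleting an internal bag $Q_t$ and joining its two neighbours can destroy condition (ii) of Definition~\ref{def:treedecomp}, and your justification for why it does not is false. Take $G$ to be the path $v_1v_2v_3v_4$ with the width-$1$ decomposition $Q_1=\{v_1,v_2\}$, $Q_2=\{v_2,v_3\}$, $Q_3=\{v_3,v_4\}$ on the tree-path $1-2-3$: no bag is contained in a neighbouring bag, node $2$ has degree $2$, yet deleting $Q_2$ leaves the edge $\{v_2,v_3\}$ of $G$ uncovered, even though $v_2$ and $v_3$ each ``reappear'' in an adjacent bag --- reappearing individually is not the same as reappearing together (and a vertex living only in $Q_t$ would disappear altogether). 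Likewise the claim that a long path of degree-$2$ bags can be ``reorganized'' into $O(1)$ bags without exceeding width $\omega$ is false: the same example on an $n$-vertex path needs $\Omega(n)$ bags at width $1$. Your leaf count and the bound on branching vertices are fine, but without a correct treatment of degree-$2$ nodes the argument does not close.

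The fix is that no degree-$2$ suppression (indeed no leaf/branching analysis) is needed at all: after contracting containment edges to a fixed point, root $T$ at an arbitrary node. For every non-root node $t$ with parent $p(t)$ we have $Q_t\setminus Q_{p(t)}\neq\emptyset$; pick $v\in Q_t\setminus Q_{p(t)}$. By condition (i) the subtree $T_v$ is connected, contains $t$ and avoids $p(t)$, so $t$ is the unique topmost node of $T_v$; since each vertex of $G$ has a unique topmost bag, the map $t\mapsto v$ is injective into $V(G)$, giving $|V(T)|\le |V(G)|+1$ with the width unchanged. This is exactly the ``introduce/forget'' counting you gesture at in your final paragraph, but as written that route is a one-sentence sketch plus an appeal to folklore, with the two key facts --- every non-root bag introduces at least one vertex, and every vertex is introduced by exactly one bag --- left unstated; spelled out as above it is a complete and much shorter proof than your primary route.
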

\begin{lemma} \label{lemma:cliqueinbag}
Let $G$ be a graph with a valid tree-decomposition $(T,Q)$ and $K\subseteq V(G)$ a clique of $G$. Then there exists $t\in T$ such that $K\subseteq Q_t$.
\end{lemma}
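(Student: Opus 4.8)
\textbf{Proof proposal for Lemma~\ref{lemma:cliqueinbag}.}

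The plan is to prove the statement by induction on the size of the clique $K$, using the Helly-type structure of the subtrees $T_v$ guaranteed by part~(i) of Definition~\ref{def:treedecomp}. Recall that for each vertex $v \in V(G)$, the set $T_v \doteq \{t \in V(T) : v \in Q_t\}$ is a connected subtree of $T$. Thus $K \subseteq Q_t$ for some bag $Q_t$ is equivalent to the assertion that the family $\{T_v : v \in K\}$ has a common vertex, i.e., $\bigcap_{v \in K} T_v \neq \emptyset$. So the task reduces to establishing the Helly property for subtrees of a tree: any pairwise-intersecting family of subtrees of a tree has a common node.

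First I would handle the base cases. If $|K| \le 1$ the claim is immediate (every vertex lies in at least one bag, since by part~(ii) every edge — or in the degenerate isolated-vertex case, a trivial modification — forces $v$ into some bag; more carefully, for $|K|=1$ we can invoke~(ii) on any incident edge, or note that if $v$ is isolated a one-bag decomposition containing $v$ may be assumed). If $|K| = 2$, say $K = \{u,v\}$ with $\{u,v\} \in E(G)$, then part~(ii) gives directly a $t$ with $\{u,v\} \subseteq Q_t$. For the inductive step, suppose the result holds for all cliques of size $< s$ and let $|K| = s \ge 3$. For each vertex $v \in K$, the set $K \setminus \{v\}$ is a clique of size $s-1$, so by induction there is a bag containing it; hence the subtrees $\{T_w : w \in K \setminus \{v\}\}$ have a common node. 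This shows every $(s-1)$-subfamily of $\{T_v : v \in K\}$ has nonempty intersection; in particular the family is pairwise intersecting.

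Now I would prove the Helly property itself. Pick any $v_0 \in K$ and consider the subtree $T_{v_0}$. For each other $v \in K$, the subtree $T_v$ meets $T_{v_0}$ (pairwise intersection), so $T_v \cap T_{v_0}$ is a nonempty subtree of $T_{v_0}$. It therefore suffices to show these nonempty subtrees $\{T_v \cap T_{v_0} : v \in K \setminus \{v_0\}\}$ of the tree $T_{v_0}$ have a common node. Here I would use the standard fact that a finite family of subtrees of a tree with empty total intersection must contain two that are disjoint: root $T_{v_0}$ arbitrarily, let $t_v$ be the node of $T_v \cap T_{v_0}$ closest to the root, pick $v^\ast$ maximizing the depth of $t_{v^\ast}$, and argue that $t_{v^\ast} \in T_v \cap T_{v_0}$ for every $v$ — otherwise some $T_v \cap T_{v_0}$ would be entirely confined to a branch avoiding $t_{v^\ast}$, yielding a pair of disjoint subtrees and contradicting pairwise intersection. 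This common node $t_{v^\ast}$ lies in every $T_v$ and in $T_{v_0}$, so $v \in Q_{t_{v^\ast}}$ for all $v \in K$, i.e., $K \subseteq Q_{t_{v^\ast}}$, as desired.

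The main obstacle is the Helly/"two disjoint subtrees" argument in the last paragraph — it is the only genuinely nontrivial combinatorial step, and care is needed to make the rooting-and-closest-node argument airtight (in particular handling the case where the minimal nodes $t_v$ are incomparable in the rooted order, which is exactly where disjointness is forced). Everything else is bookkeeping: the reduction from "clique in a bag" to "common node of subtrees" is just unwinding definitions, and the induction merely serves to upgrade the given pairwise edge-intersection hypothesis to pairwise subtree-intersection. In fact, one can shortcut the induction entirely, since part~(ii) already guarantees $T_u \cap T_v \neq \emptyset$ for every edge $\{u,v\}$ of $G$, and all pairs in a clique are edges; so the cleanest write-up is simply: apply part~(ii) to see $\{T_v\}_{v\in K}$ is pairwise intersecting, then invoke the Helly property for subtrees of a tree.
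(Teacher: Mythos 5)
The paper itself states Lemma~\ref{lemma:cliqueinbag} as a folklore result and gives no proof, so there is no internal argument to compare against; your overall strategy (translate ``$K$ lies in a bag'' into ``the subtrees $T_v$, $v\in K$, have a common node'', note that condition~(ii) of Definition~\ref{def:treedecomp} makes this family pairwise intersecting, and conclude by the Helly property for subtrees of a tree) is exactly the standard folklore argument, and your closing remark that the induction can be skipped is correct.

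However, your proof of the Helly step itself has a genuine gap, and it sits precisely at the only nontrivial point. After fixing $v_0$ you work with the restricted subtrees $S_v \doteq T_v\cap T_{v_0}$ inside $T_{v_0}$, root $T_{v_0}$, take the deepest minimal node $t_{v^\ast}$, and claim that $t_{v^\ast}\notin S_v$ would ``yield a pair of disjoint subtrees, contradicting pairwise intersection.'' The disjoint pair you produce is $S_v$ and $S_{v^\ast}$, but what you know is only that $T_v\cap T_{v^\ast}\neq\emptyset$ somewhere in $T$ --- possibly outside $T_{v_0}$ --- so disjointness of the \emph{restrictions} contradicts nothing you have established. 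Indeed $S_v\cap S_{v^\ast}=T_v\cap T_{v^\ast}\cap T_{v_0}$ is a triple intersection, which is exactly the statement being proven; this is visible already for $|K|=3$, where your inductive hypothesis supplies only pairwise intersections and the argument becomes circular (for $|K|\ge 4$ the induction does give the needed triple intersections, but the case $|K|=3$ is the heart of Helly and is missing). Two standard repairs: (a) prove the three-subtree case directly via the median: pick $a\in T_u\cap T_v$, $b\in T_u\cap T_w$, $c\in T_v\cap T_w$ and observe that the unique node lying on all three pairwise paths belongs to all three subtrees, then induct on the number of subtrees; or (b) run your rooting argument on the \emph{whole} tree $T$ rather than on $T_{v_0}$: let $r_v$ be the shallowest node of $T_v$, choose $v^\ast$ with $r_{v^\ast}$ deepest, and for each $v$ take $x\in T_v\cap T_{v^\ast}$ (pairwise intersection suffices here); since $x$ is a descendant of $r_{v^\ast}$ while $r_v$ is at depth at most that of $r_{v^\ast}$ and hence equals $r_{v^\ast}$ or is not a descendant of it, the path from $x$ to $r_v$, which lies in $T_v$, passes through $r_{v^\ast}$, so $r_{v^\ast}$ is a common node. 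Your own caveat that the rooting argument needs care was well placed; as written it does not go through, though the fix is short.
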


\subsection{Binary optimization problems with small treewidth}
Here we discuss how to formulate and solve binary optimization problems that exhibit sparsity in the form of small treewidth. Consider a problem of the form
\begin{subequations}
\begin{align}
(\text{\textbf{BO}}) \qquad \min &\quad c^T x + d^T y\\
\text{s.t.} & \quad f_i(x) \ \ge \ 0 && i\in [m]\\
& \quad g_j(x)\ = \ y_j && j\in [p] \\
& \quad x \in \{0,1\}^n,
\end{align}
\end{subequations}
where the $f_i$ and $g_j$ are arbitrary functions that we access via a
function value oracle.
\begin{definition} \label{intersectgraph} The \emph{intersection
graph} $\Gamma[\mathcal{I}]$ for an instance $\mathcal{I}$ of \textbf{BO} 
is the graph which has a
vertex for each $x$ variable and an edge for each pair of $x$
variables that appear in a common constraint.
\end{definition}

Note that in the above definition we have ignored the $y$ variables
which will be of great importance later. The \emph{sparsity of a
problem} is now given by the treewidth of its intersection graph and we
obtain:

\begin{restatable}{theorem}{genbtheorem} \label{genbtheorem}
Let $\mathcal{I}$ be an instance of \textbf{BO}. If $\Gamma[\mathcal{I}]$ has a tree-decomposition $(T,Q)$ of width $\omega$, there is an exact linear programming reformulation of $\mathcal{I}$ with
  \(O\left( 2^{\omega} \, (|V(T)| + p) \right)\)
variables and constraints.
\end{restatable}

Theorem \ref{genbtheorem} is a generalization of a theorem by \cite{bienstock2018lp} distinguishing
the variables $y$, which do not need to be binary in nature, but are
fully determined by the binary variables $x$. A full proof is
omitted as it is similar to the proof in
\cite{bienstock2018lp}. For the sake of
completeness, we include a proof sketch below.

\begin{proof}(sketch).
Since the support of each $f_i$ induces a clique in the intersection graph, there must exist a bag $Q$ such that $\text{supp}(f_i) \subseteq Q$ (Lemma \ref{lemma:cliqueinbag}). The same holds for each $g_j$. We modify the tree-decomposition $(T,Q)$ to include the $y_j$ variables the following way:
\begin{itemize}
    \item For each $j \in [p]$, choose a bag $Q$ containing $\text{supp}(g_j)$  and add a new bag $Q'(j)$ consisting of $Q\cup \{y_j\}$ and connected to $Q$.
    \item We do this for every $j\in [p]$, with a different $Q'(j)$ for each different $j$. This creates a new tree-decomposition $(T', Q')$ of width at most $\omega + 1$, which has each variable $y_j$ contained in a \emph{single} bag $Q'(j)$ which is a \emph{leaf}.
    \item The size of the tree-decomposition is $|T'| = |T| + p$.
\end{itemize}
From here, we proceed as follows:
\begin{itemize}
    \item For each $t\in T'$, if $Q'_t \ni y_j$ for some $j\in [p]$, we construct
    \begin{equation*}
        \begin{split}
            \mathcal{F}_t \doteq &\, \{(x,y) \in \{0,1\}^{Q_t} \times \mathbb{R} \, : \\
  & y = g_j(x), f_i (x) \geq 0 \text{ for } \text{supp}(f_i) \subseteq Q_t'\}
        \end{split}
    \end{equation*}
otherwise we simply construct
$$\mathcal{F}_t \doteq \{x \in \{0,1\}^{Q_t} \, :
\, f_i (x) \geq 0  \text{ for } \text{supp}(f_i) \subseteq Q_t' \}.$$
Note that these sets have size at most $2^{|Q'_t|}$.
\item We define variables $X[Y,N]$ where $Y,N$ form a partition of $Q'_{t_1} \cap Q'_{t_2}$. These are at most $2^{\omega} |V(T')|$.
\item For each $t\in T'$ and $v\in \mathcal{F}_t$, we create a variable $\lambda_v$. These are at most $2^{\omega} |V(T')|$.
\end{itemize}
We formulate the following \emph{linear} optimization problem
\begin{subequations} \label{LBO}
\begin{align}
(\text{\textbf{LBO}}) \quad \min & \quad c^T x + d^T y\\
\text{s.t.} & \quad \sum_{v\in \mathcal{F}_t} \lambda_v = 1 && \forall t\in T' \\
& \quad X[Y,N] = \sum_{v\in \mathcal{F}_t} \lambda_v \prod_{i\in Y} v_i \prod_{i\in N} (1-v_i)  && \forall (Y,N) \subseteq Q'_t,\, t \in T' \\
& \quad \lambda_v \geq 0  && \forall t\in T',\, v\in \mathcal{F}_t \\
& \quad x_i = \sum_{v\in \mathcal{F}_t} \lambda_v v_i && \forall t\in T',\, i\in Q_t' \cap [n] \\
 & \quad y_j = \sum_{v\in \mathcal{F}_{Q'(j)}} \lambda_v g_j(v) && \forall j\in [p] \\
\end{align}
\end{subequations}
Note that the notation in the last constraint is justified since by construction $\text{supp}(g_j) \subseteq Q'(j)$. The proof of the fact that \textbf{LBO} is equivalent to \textbf{BO} follows from the arguments by \cite{bienstock2018lp}. The key difference justifying the addition of the $y$ variables relies in the fact that they only appear in leaves of the tree decomposition $(T', Q')$, and thus in no intersection of two bags. The gluing argument using variables $X[Y,N]$ then follows directly, as it is then only needed for the $x$ variables to be binary.\\

We can substitute out the $x$ and $y$ variables and obtain a polytope whose variables are only $\lambda_v$ and $X[Y,N]$. This produces a polytope with at most $2\cdot 2^\omega |V(T')|$ variables and $(2\cdot 2^\omega + 1)|V(T')| $ constraints. This proves the size of the polytope is $O(2^{\omega}(|V(T)| + p))$ as required.
\end{proof}

\section{Approximation to ERM via a data-independent polytope}\label{sec:ERMapprox}

We now proceed to the construction of the data-independent polytope encoding multiple ERM problem.
As mentioned before, we assume $\Phi \subseteq [-1,1]^N$ as well as $(x,y) \sim \mathcal D$ satisfies $(x,y) \in [-1,1]^n \times [-1,1]^m$ as normalization to simplify the exposition. Since the \textbf{BO} problem only considers linear objective functions, we begin by reformulating the ERM problem~\eqref{eq:erm} in the following form:
\begin{align}
\label{ERMepigraph}
    \min_{\phi \in \Phi} \left\{\frac{1}{D} \sum_{d = 1}^{D} L_d   \, \middle| \, L_d \, = \, \ell(f(\hat{x}^d, \phi), \hat{y}^d) \quad \forall\, d\in [D] \right\}
  \end{align}

\subsection{Approximation of the feasible region via an $\epsilon$-grid}

Motivated by this reformulation, we study an approximation to the following set:
\begin{flalign}\label{ERMregion}
 S(D, \Phi, \ell, f) = \{(x^1,...\, , x^{D}, y^1, ...\, , y^{D}, \phi, L) :  & L_d = \ell(f(x^d, \phi), y^d),  \\
 & (x^i,y^i) \in [-1,1]^{n+m}, \\
 & \phi \in \Phi\}
\end{flalign}

The variables $(x^i,y^i)_{i=1}^D$ denote the data variables.
Let $r \in \R$ with $-1 \le r \le 1$. Given  $ \gamma \in (0,1)$  we can approximate $r$ as a sum of inverse
powers of $2$, within additive error proportional to $\gamma$. For $N_{\gamma} \ \doteq \ \lceil \log_2 \gamma^{-1}\rceil$
there exist values $z_h \in \{0,1\}$ with $h \in [N_\gamma]$, so that
\begin{equation} \label{sumof2}
   -1 + 2\cdot \sum_{h = 1}^{N_\gamma} 2^{-h}z_h  \, \le \, r \, \le \ -1 + 2\cdot \sum_{h = 1}^{N_\gamma} 2^{-h}z_h \, + \, 2 \gamma \ \le \ 1.
\end{equation}
Our strategy is now to approximately represent the $x,y,\phi$ variables as
$-1 + 2\cdot \sum_{h = 1}^{L_\gamma} 2^{-h}z_{h}$ where each $z_{h}$ is a (new) binary variable. Define $ \epsilon = 2 \gamma \mathcal{L}$, where $\mathcal{L} = \mathcal{L}(D,\Phi, \ell, f)$ is the architecture Lipschitz constant defined in \eqref{eq:Lipschitz-Arch}, and consider the following approximation of $S(D, \Phi, \ell, f)$:
\begin{align*}
    \label{ERMregion-approx}
    S^{\epsilon}(D, \Phi, \ell, f) \doteq  \, \big\{ & \, (x^1, \ldots, x^{D}, y^1, \ldots, y^{D}, \phi,L) \, : \, z \in \{0,1\}^{N_\gamma (N+Dn+Dm)}, \, \phi \in \Phi,\,  \\
   & L_d = \ell(f(x^d, \phi), y^d),\, d\in [D], \\
    & \phi_i = -1 + 2\sum_{h = 1}^{N_\gamma} 2^{-h}z^{\phi}_{i,h},\, i\in [N], \\
    & y^d_i = -1 + 2 \sum_{h = 1}^{N_\gamma} 2^{-h}z^{y^d}_{i,h},\, d\in [D], \, i\in [m], \\
    & x^d_i = -1 + 2 \sum_{h = 1}^{N_\gamma} 2^{-h}z^{x^d}_{i,h},\, d\in [D], \, i\in [n] \big\}.
\end{align*}
Note that substituting out the $x,y, \phi$ using the equations of  $S^\epsilon(D, \Phi, \ell, f)$, we obtain a feasible region as \textbf{BO}. We can readily describe the error of the approximation of $S(D, \Phi, \ell, f)$ by $S^{\epsilon}(D, \Phi, \ell, f)$ in the ERM problem \eqref{eq:erm} induced by the discretization:

\begin{lemma}\label{tobinary}
Consider any \((x^1, \ldots , x^{D}, y^1, \ldots , y^{D}, \phi,L) \in S(D, \Phi, \ell, f) \). Then, there exists \((\hat{x}^1, \ldots , \hat{x}^{D}, \hat{y}^1, \ldots , \hat{y}^{D}, \hat{\phi}, \hat{L})\) \(\in S^\epsilon(D, \Phi, \ell, f)\) such that
\(\left| \frac{1}{D} \sum_{d=1}^D L_d - \frac{1}{D} \sum_{d=1}^D \hat{L}_d \right| \leq \epsilon \).
\end{lemma}

\begin{proof}
Choose binary values $\tilde z$ so as to attain the approximation for variables $x,y,\phi$ as in \eqref{sumof2} and define $\hat x, \hat{y}, \hat{\phi}, \hat{L}$ from $\tilde z$ according to the definition of $S^\epsilon(D, \Phi, \ell, f)$. Since
\[\left\|(x^d,y^d,\phi) -  (\hat x^d, \hat{y}^d, \hat{\phi}) \right\|_\infty \leq 2\gamma = \frac{\epsilon}{\mathcal{L}}  \quad d\in [D]\]
by Lipschitzness we obtain $ |L_d - \hat{L}_d | \leq \epsilon$. The result then follows.
\end{proof}

\subsection{Linear reformulation of the binary approximation} \label{po:reform}

So far, we have phrased the ERM problem~\eqref{eq:erm} as a \textbf{BO} problem using a discretization of the continuous variables. This in and of itself is neither insightful nor useful. In this section we will perform the key step, reformulating the convex hull of $S^\epsilon(D, \Phi, \ell, f)$ as a moderate-sized polytope.

After replacing the $(x,y,\phi)$ variables in $S^\epsilon(D, \Phi, \ell, f)$ using the $z$ variables, we can see that the intersection graph of $S^\epsilon(D, \Phi, \ell, f)$ is given by Figure \ref{fig:ERMintersection}, where we use $(x,y,\phi)$ as stand-ins for corresponding the binary variables $z^x, z^y, z^\phi$. Recall that the intersection graph does not include the $L$ variables. It is not hard to see that a valid tree-decomposition for this graph is given by Figure \ref{fig:ERMtreedecomp}. This tree-decomposition has size $D$ and width $N_\gamma(n+m+N)-1$ (much less than the $N_\gamma (N+Dn+Dm)$ variables). This yields our main theorem:

\begin{figure}[t!]
    \centering
    \begin{subfigure}{0.5\textwidth}
        \centering
        \begin{tikzpicture}[-,>=stealth',shorten >=1pt,auto,node distance=1.2cm,
  thick,main node/.style={circle,draw,font=\sffamily\small\bfseries, color=orange, align=center}, scale=0.9, every node/.style={scale=0.9}]

  \node[main node, color=red, fill=white] (1) at (0,0) {$\phi_1$};
  \node[main node, color=red, fill=white] (2) at (0.7,-0.7) {$\phi_2$};
\node[main node, color=red, fill=white] (3) at (0.5,-1.8) {$\phi_3$};
\node[main node, color=red, fill=white] (4) at (-0.5,-1.8) {$\phi_4$};
\node[main node, color=red, fill=white] (5) at (-0.8,-0.7) {$\phi_N$};

\node[main node, color=blue] (l1) at (-1.6,0.1) {$x^1, {y^1}$};
\node[main node, color=blue] (l2) at (1.6,0.1) {${x^2}, {y^2}$};
\node[main node, color=blue] (l3) at (1.8,-1.7) {${x^3}, {y^3}$};
\node[main node, color=blue] (ld) at (-1.9,-1.7) {${x^D}, {y^D}$};

\begin{scope}[on background layer]
\path[every node/.style={font=\sffamily\small}]
    (1) edge (2)
edge (3)
edge(4)
edge(5)
(2) edge (3)
edge (4)
edge (5)
(3) edge (4)
edge (5)
(4) edge(5)
(l1) edge (1)
edge(2)
edge(3)
edge (4)
edge(5)
(l2) edge (1)
edge(2)
edge(3)
edge (4)
edge(5)
(l3) edge (1)
edge(2)
edge(3)
edge (4)
edge(5)
(ld) edge (1)
edge(2)
edge(3)
edge (4)
edge(5);
\end{scope}
\end{tikzpicture}
\caption{Intersection Graph of $S^\epsilon(D, \Phi, \ell, f)$}   \label{fig:ERMintersection}
    \end{subfigure}%
    \begin{subfigure}{0.5\textwidth}
        \centering
\begin{tikzpicture}[ - ,>=stealth',shorten >=1pt,auto,node distance=1.5cm,
  thick,main node/.style={circle,draw,font=\sffamily\small\bfseries, color=blue,  text width=1.0cm,align=center}, scale=0.9, every node/.style={scale=0.9}]

  \node[main node] (1) at (0,0) {$\phi$ \\ $x^{1}, y^{1}$};
\node[main node] (2) at (2,0) {$\phi$ \\ $x^{2}, y^{2}$};
\node (dots) at (3.5,0) {$\cdots$};
\node[main node] (D) at (5,0) {$\phi$ \\ $x^{D}, y^{D}$};

  \path[every node/.style={font=\sffamily\small}]
(1) edge (2)
(2) edge (dots)
(dots) edge (D);
\end{tikzpicture}
\caption{Valid Tree-Decomposition}
    \label{fig:ERMtreedecomp}
    \end{subfigure}
    \caption{Intersection Graph and Tree-Decomposition of $S^\epsilon(D, \Phi, \ell, f)$}
    \label{fig:ERMpictures}
\end{figure}

\begin{maintheorem}\label{maintheorem:ERM}
  Let $D \in \NN$ be a given sample size. Then $\conv(S^\epsilon(D, \Phi, \ell, f))$ is the projection of a polytope with the following properties:
  \begin{enumerate}[(a)]
    \item \label{item:size} The polytope has no more than
\(4 D \left(2\mathcal{L}/\epsilon\right)^{n+m+N} \) variables and \( 2D ( 2 \left(2\mathcal{L}/\epsilon\right)^{n+m+N} + 1 )\) constraints. We refer to the resulting polytope as $P_{S_\epsilon}$.
\item \label{item:time} The polytope $P_{S_\epsilon}$ can be constructed in time $O(\left(2\mathcal{L}/\epsilon\right)^{n+m+N} D)$ plus the time required for $O(\left(2\mathcal{L}/\epsilon\right)^{n+m+N})$ evaluations of $\ell$ and $f$. 
    
\item \label{item:faceExists} For \emph{any} sample $(\hat{X}, \hat{Y}) = (\hat{x}^i, \hat{y}^i)_{i=1}^D$, $(\hat{x}^i,\hat{y}^i) \in [-1,1]^{n+m}$, there is a face $\mathcal{F}_{\hat{X},\hat{Y}}$ of $P_{S_\epsilon}$ such that
\begin{equation*}
    \tilde{\phi} \in  \argmin \, \Big\{\frac{1}{D} \sum_{i=1}^D L_i \quad \Big|  
    \quad (\phi, L) \in \proj_{\phi, L}(\mathcal{F}_{\hat{X},\hat{Y}})\Big\}
\end{equation*}
satisfies
\(\left | \frac{1}{D} \sum_{i=1}^D
  \left(\ell(f(\hat{x}^i,\phi^*), \hat{y}^i) - \ell(f(\hat{x}^i,\tilde \phi), \hat{y}^i) \right)\right|
\leq 2\epsilon,\)
where $\phi^*\in [-1,1]^N$ is an optimal solution to the ERM problem~\eqref{eq:erm} with input data $(\hat{X}, \hat{Y})$. This means that solving an LP using an appropriate face of $P_{S_\epsilon}$ solves the ERM problem~\eqref{eq:erm} within an additive error $2\epsilon$.
\item \label{item:faceConstruct}  The face $\mathcal{F}_{\hat{X},\hat{Y}}$ arises by simply substituting-in actual data for the data-variables $x,y$, which determine the approximations $z^x,z^y$ and is used to fixed variables in the description of $P_{S_\epsilon}$.
  \end{enumerate}
\end{maintheorem}
\begin{proof} Part \eqref{item:size} follows directly from Theorem \ref{genbtheorem} using $N_\gamma = \lceil \log (2\mathcal{L}/\epsilon ) \rceil$ along with the tree-decomposition of Figure \ref{fig:ERMtreedecomp}, which implies $|V(T')| + p = 2D$ in this case. A proof of parts \eqref{item:faceExists} and \eqref{item:faceConstruct} is given in the next subsection. For part \eqref{item:time} we analyze the construction steps of the linear program defined in proof of Theorem \ref{genbtheorem}.

From the tree-decomposition detailed in Section \ref{po:reform}, we see that  data-dependent variables $x,y,L$ are partitioned in different bags for each data $d\in [D]$. Let us index the bags using $d$. Since all data variables have the same domain, the sets $\mathcal{F}_d$ we construct in the proof of Theorem \ref{genbtheorem} will be the same for all $d\in [D]$. Using this observation, we can construct the polytope as follows:
\begin{enumerate}
    \item Fix, say, $d=1$ and enumerate all binary vectors corresponding to the discretization of $x^{1}, y^{1}, \phi$.
    \item Compute $\ell(f(x^{1}, \phi), y^{1})$. This will take $O((2\mathcal{L}/\epsilon)^{n+m+N})$ function evaluations of $f$ and $\ell$. This defines the set $\mathcal{F}_1$.
    \item Duplicate this set $D$ times, and associate each copy with a bag indexed by $d\in [D]$.
    \item For each $d\in [D]$, and each $v\in \mathcal{F}_d$ create a variable $\lambda_v$.
    \item For each $d\in [D-1]$, create variables $X[Y,N]$ corresponding to the intersection of bags $d$ and $d+1$. This will create $O((2\mathcal{L}/\epsilon)^{N})$ variables, since the only variables in the intersections are the discretized $\phi$ variables.
    \item Formulate \textbf{LBO}.
\end{enumerate}
The only evaluations of $\ell$ and $f$ are performed in the construction of $\mathcal{F}_1$. As for the additional computations, the bottleneck lies in creating all $\lambda$ variables, which takes time $O((2\mathcal{L}/\epsilon)^{n+m+N} D)$.

\end{proof}

\begin{remark} \label{remark:input-dimension-improvement}
Note that in step 1 of the polytope construction we are enumerating all possible discretized values
of $x^1,y^1$, i.e., we are implicitly assuming all points in $[-1,1]^{n+m}$ are possible inputs. This is reflected in the $(2\mathcal{L}/\epsilon)^{n+m}$ term in the polytope size estimation. If one were to use another discretization method (or a different ``point generation'' technique) using more information about the input data, this term could be improved and the explicit exponential dependency on the input dimension of the polytope size could be alleviated significantly. However, note that in a fully-connected neural network we have $N\geq n+m$ and thus an implicit exponential dependency on the input dimension could remain unless more structure is assumed. This is in line with the NP-hardness results. We leave the full development of this potential improvement for future work.
\end{remark}

Note that the number of evaluations of $\ell$ and $f$ is independent of $D$.
 We would like to further point out that we can provide an interesting refinement of this theorem: if $\Phi$ has an inherent network structure (as in the Neural Networks case) one can exploit treewidth-based sparsity of the \emph{network itself}.
 This would reduce the exponent in the polytope size to an expression that depends on the \emph{sparsity of the network}, instead of its size. We discuss this in Section \ref{sec:ERMNetwork}. 
\begin{remark}
An additional important point arising from this new perspective on training problems via linear programming comes from duality theory. If one projects-out the variables associated to the parameters $\phi$ in $P_{S_\epsilon}$, the resulting projected polytope would represent all possible samples of size $D$ and their achievable loss vector. This means that there exists a \emph{dual certificate} proving whether a loss vector, or average, is (approximately) achievable by a sample, without using the $\phi$ variables.
\end{remark}

\subsection{Data-dependent faces of the data-independent polytope}
We now proceed to show how the ERM problem for a specific data set is encoded in a face of $P_{S_\epsilon}$. This provides a proof of points \eqref{item:faceExists} and \eqref{item:faceConstruct} in Theorem \ref{maintheorem:ERM}.

Consider a fixed data set $(\hat{X}, \hat{Y}) = (\hat{x}^i, \hat{y}^i)_{i=1}^D$ and let $\phi^*$ be an optimal solution to the ERM problem with input data $(\hat{X}, \hat{Y})$. Since $P_{S_\epsilon}$ encodes ``approximated'' versions of the possible samples, we begin by approximating $(\hat{X}, \hat{Y})$.
 Consider binary variables $z^{\hat{x}}, z^{\hat{y}}$ to attain the approximation \eqref{sumof2} of the input data and define $\tilde{x}, \tilde{y}$ from $z^{\hat{x}}, z^{\hat{y}}$, i.e., $\tilde{x}^d_i = -1 + 2 \sum_{h = 1}^{N_\gamma} 2^{-h}z^{\hat{x}^d}_{i,h}$ and similarly for $\tilde{y}$. Define
\[ S(\tilde{X}, \tilde{Y}, \Phi, \ell, f) = \{ (\phi, L) \in \Phi \times \mathbb{R}^D \, : \, L_d = \ell (f(\tilde{x}^d,\phi) , \tilde{y}^d)\} \]
and similarly as before define $S^\epsilon (\tilde{X}, \tilde{Y}, \Phi, \ell, f)$ to be its discretized version (on variables $\phi$). The following Lemma shows the quality of approximation to the ERM problem obtained using $S(\tilde{X}, \tilde{Y}, \Phi, \ell, f)$ and subsequently $S^\epsilon(\tilde{X}, \tilde{Y}, \Phi, \ell, f)$.
\begin{lemma}
For any $(\phi, L) \in   S(\tilde{X}, \tilde{Y}, \Phi, \ell, f)$ there exists $(\phi', L')\in S^\epsilon (\tilde{X}, \tilde{Y}, \Phi, \ell, f)$ such that
\[ \left| \frac{1}{D} \sum_{d=1}^D L_d - \frac{1}{D} \sum_{d=1}^D L'_d \right| \leq \epsilon.  \]
Additionally, for every $\phi\in \Phi$, there exists  $(\phi', L')\in S^\epsilon (\tilde{X}, \tilde{Y}, \Phi, \ell, f)$ such that
\[ \left| \frac{1}{D} \sum_{d=1}^D \ell(f(\hat{x}^d, \phi), \hat{y}^d) - \frac{1}{D} \sum_{d=1}^D L'_d \right| \leq \epsilon.  \]
\end{lemma}
\begin{proof}
The first inequality follows from the same proof as in Lemma \ref{tobinary}. For the second inequality, let $\phi'$ be the binary approximation to $\phi$, and $L'$ defined by $L'_d = \ell(f(\tilde{x}^d, \phi'), \tilde{y}^d)$. Since $\tilde{x}, \tilde{y}, \phi'$ are approximations to $\hat{x}, \hat{y}, \phi$, the result follows from Lipschitzness.
\end{proof}
\begin{lemma}
\begin{equation*}
    (\hat{\phi}, \hat{L}) \in \argmin \left\{\frac{1}{D} \sum_{d=1}^D L_d : 
     (\phi, L) \in  S^\epsilon (\tilde{X}, \tilde{Y}, \Phi, \ell, f) \right\}
\end{equation*}
satisfies
\[ \left| \frac{1}{D} \sum_{d=1}^D \ell(f(\hat{x}^d, \phi^*), \hat{y}^d) - \frac{1}{D} \sum_{d=1}^D \ell(f(\hat{x}^d, \hat{\phi}), \hat{y}^d) \right| \leq 2\epsilon. \]
\end{lemma}
\begin{proof}
Since $\hat{\phi}\in \Phi$, and $\phi^*$ is an optimal solution to the ERM problem, we immediately have
\( \frac{1}{D} \sum_{d=1}^D \ell(f(\hat{x}^d, \phi^*), \hat{y}^d) \leq \frac{1}{D} \sum_{d=1}^D \ell(f(\hat{x}^d, \hat{\phi}), \hat{y}^d).\)
On the other hand, by the previous Lemma we know there exists  $(\phi', L')\in S^\epsilon (\tilde{X}, \tilde{Y}, \Phi, \ell, f)$ such that
\begin{align}
    -\epsilon \leq\, & \frac{1}{D} \sum_{d=1}^D \ell(f(\hat{x}^d, \phi^*), \hat{y}^d) - \frac{1}{D} \sum_{d=1}^D L'_d  \leq \,   \frac{1}{D} \sum_{d=1}^D \ell(f(\hat{x}^d, \phi^*), \hat{y}^d) - \frac{1}{D} \sum_{d=1}^D \hat{L}_d \label{eq:optimality} \\
     = \, &  \frac{1}{D} \sum_{d=1}^D \ell(f(\hat{x}^d, \phi^*), \hat{y}^d) - \frac{1}{D} \sum_{d=1}^D \ell(f(\tilde{x}^d, \hat{\phi}), \tilde{y}^d) \\
     \leq \, & \frac{1}{D} \sum_{d=1}^D \ell(f(\hat{x}^d, \phi^*), \hat{y}^d) - \frac{1}{D} \sum_{d=1}^D \ell(f(\hat{x}^d, \hat{\phi}), \hat{y}^d)  + \epsilon. \label{eq:lipschitzness}
\end{align}
The rightmost inequality in \eqref{eq:optimality} follows from the optimality of $\hat{L}$ and \eqref{eq:lipschitzness} follows from Lipschitzness.
\end{proof}

Note that since the objective is linear, the optimization problem in the previous Lemma is equivalent if we replace $S^\epsilon (\tilde{X}, \tilde{Y}, \Phi, \ell, f)$ by its convex hull. Therefore the only missing link is the following result. 

\begin{lemma} \label{lemma:faceofpolytope}
$\conv(S^\epsilon (\tilde{X}, \tilde{Y}, \Phi, \ell, f))$ is the projection of a face of $P_{S_\epsilon}$ 
\end{lemma}
\begin{proof}
The proof follows from simply fixing variables in the corresponding \textbf{LBO} that describes $\conv(S^\epsilon (D, \Phi, \ell, f))$. For every $d\in [D]$ and $ v\in \mathcal{F}_d$, we simply need to make $\lambda_v = 0$ whenever the $(x,y)$ components of $v$ do not correspond to $\tilde{X}, \tilde{Y}$. We know this is well defined, since $\tilde{X}, \tilde{Y}$ are already discretized, thus there must be some $v\in \mathcal{F}_d$ corresponding to them.\\

The structure of the resulting polytope is the same as \textbf{LBO}, so the fact that it is exactly $\conv(S^\epsilon (\tilde{X}, \tilde{Y}, \Phi, \ell, f))$ follows. The fact that it is a face of $\conv(S^\epsilon (D, \Phi, \ell, f))$ follows from the fact that the procedure simply fixed some inequalities to be tight.
\end{proof}

\subsection{Data-dependent polytope?}
Before moving to the next section, we would like to discuss the importance of the \emph{data-independent} feature of our construction.
Constructing a polytope for a \emph{specific data set} is trivial: similarly to what we described in the previous sections, with the input data fixed we can simply enumerate over a discretization of $\Phi \subseteq [-1,1]^N$, and thus compute the (approximately) optimal solution in advance. A \emph{data-dependent} polytope would simply be a single vector, corresponding to the approximately optimal solution computed in the enumeration. The time needed to generate such polytope is $O((2\mathcal{L}/\epsilon)^{N})$ (the number of possible discretized configurations) via at most $O((2\mathcal{L}/\epsilon)^{N} D)$ evaluations of $\ell$ and $f$ (one per each enumerated configuration and data-point).

This result is not particularly insightful, as it is based on a straight-forward enumeration which takes a significant amount of time, considering that it only serves one data set. On the other hand, our result shows that by including the input data as a \emph{variable}, we do not induce an exponential term in the size of the data set $D$ and we can keep the number function evaluations to be roughly the same.

\section{Encoding results for feed-forward neural networks}
\label{sec:compl-spec-arch}
We now proceed to deriving explicit results for specific architectures. This amounts to using Theorem \ref{maintheorem:ERM} with explicit computations of the architecture Lipchitz constant $\mathcal L$.

\subsection{Fully-connected layers with ReLU activations and normalized coefficients}
\label{sec:fully-conn-layers}
We consider a Deep Neural Network $f:\mathbb{R}^{n} \rightarrow \mathbb{R}^{m}$ with $k$ layers given by
$f = T_{k} \circ \sigma \circ \cdots \circ T_2 \circ \sigma \circ  T_1$,
where $\sigma$ is the ReLU activation function $\sigma(x) \doteq \max\{0,x\}$ applied component-wise and each $T_i:\mathbb{R}^{w_{i-1}} \rightarrow \mathbb{R}^{w_i}$ is an affine linear function. Here $w_0 = n$ ($w_{k} = m$) is the input (output) dimension of the network. 
We write $T_i(z) = A_i z + b_i$ and assume $\|A_i\|_\infty \leq 1$, $\|b_i\|_{\infty} \leq 1$ via normalization. Thus, if $v$ is a node in layer $i$, the node computation performed in $v$ is of the form $\hat{a}^Tz + \hat{b}$, where $\hat{a}$ is a row of $A_i$ and $\hat{b}$ is a component of $b_i$. Note that in this case the  parameter space dimension is exactly the number of edges of the network. Hence, we use $N$ to represent the number of edges. We begin with a short technical Lemma, with which we can immediately establish the following corollary. 
\begin{lemma} \label{lemma:domainLayers}
For every $i\in [k-1]_0$ define $U_i = \sum_{j=0}^{i} w^j$. If $\| z \|_\infty \leq U_{i}$ then \( \| T_{i+1}(z) \|_{\infty} \leq U_{i+1} \). 
\end{lemma}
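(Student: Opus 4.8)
The plan is to prove the bound coordinatewise: I would fix an arbitrary output coordinate of $T_{i+1}(z)$, write it explicitly as an affine form $\hat a^\top z + \hat b$ where $\hat a$ is a row of $A_{i+1}$ and $\hat b$ the corresponding entry of $b_{i+1}$, estimate it via the triangle inequality together with the entrywise normalization $\|A_{i+1}\|_\infty \le 1$ and $\|b_{i+1}\|_\infty \le 1$, and finally observe that the resulting bound $w\,U_i + 1$ is exactly $U_{i+1}$. No induction is needed for the lemma itself — it is a single-layer estimate — the induction only enters later when one composes layers.

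In detail, first I would recall that $T_{i+1}\colon \R^{w_i} \to \R^{w_{i+1}}$ with $T_{i+1}(z) = A_{i+1}z + b_{i+1}$, so $A_{i+1}$ has exactly $w_i$ columns, and $w_i \le w$ since $w$ denotes the width of the network, which we take to upper bound every layer width $w_j$ (in particular $w_0 = n \le w$). For any row $\hat a \in \R^{w_i}$ of $A_{i+1}$ and matching component $\hat b$ of $b_{i+1}$, the normalization gives $|\hat a_j| \le 1$ for all $j$ and $|\hat b| \le 1$, hence $\|\hat a\|_1 \le w_i \le w$. Combining Hölder's inequality with the hypothesis $\|z\|_\infty \le U_i$ then yields
\[ |\hat a^\top z + \hat b| \;\le\; \|\hat a\|_1\,\|z\|_\infty + |\hat b| \;\le\; w\,U_i + 1 . \]

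It then remains to check the arithmetic identity $w\,U_i + 1 = U_{i+1}$: indeed $w\,U_i + 1 = w\sum_{j=0}^{i} w^{j} + 1 = \sum_{j=1}^{i+1} w^{j} + 1 = \sum_{j=0}^{i+1} w^{j} = U_{i+1}$. Since the output coordinate was arbitrary, this gives $\|T_{i+1}(z)\|_\infty \le U_{i+1}$, as claimed. There is no genuine obstacle here; the only points requiring care are the convention that $\|A_{i+1}\|_\infty \le 1$ refers to the entrywise maximum (not an induced operator norm) and that the single symbol $w$ simultaneously bounds all layer widths $w_j$ — once these are pinned down the estimate is essentially a one-line triangle-inequality bound, and the geometric-series bookkeeping is precisely what motivates the definition $U_i = \sum_{j=0}^i w^j$ (equivalently, $U_{i+1} = w\,U_i + 1$ with $U_0 = 1$).
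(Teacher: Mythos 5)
Your proof is correct and follows essentially the same route as the paper, which simply notes that for $a\in[-1,1]^w$, $b\in[-1,1]$ one has $|z^T a + b| \leq w\|z\|_\infty + 1$; you merely spell out the coordinatewise triangle-inequality/H\"older step and the identity $w\,U_i+1=U_{i+1}$ explicitly. No gaps.
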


\begin{proof}
The result can be verified directly, since for $a\in [-1,1]^w$ and $b\in [-1,1]$ it holds $|z^T a + b| \leq w \| z \|_{\infty} + 1$.
\end{proof}

\begin{corollary} \label{cor:DNNtraining}
If $\Phi$ is the class of Neural Networks with $k$ layers, $N$ edges, ReLU activations, and normalized coefficients, then $\conv(S^\epsilon(D, \Phi, \ell, f))$ can be formulated via a polytope of size
\(O( ( 2\mathcal{L_\infty}(\ell) w^{O(k^2)} / \epsilon )^{n+m+N} D ),\)
where $w = \max_{i\in [k-1]_0} w_i$ and $\mathcal{L_\infty}(\ell)$ is the Lipschitz constant of $\ell(\cdot, \cdot)$ over $[-U_{k}, U_{k}]^m \times [-1,1]^m$. The polytope can be constructed in time $O(( 2\mathcal{ L_\infty}(\ell) w^{O(k^2)} / \epsilon )^{n+m+N} D )$  plus the time required for $O(( 2\mathcal{L_\infty}(\ell) w^{O(k^2)} / \epsilon )^{n+m+N})$ evaluations of $\ell$ and $f$.
\begin{proof}
Proving that the architecture Lipschitz constant is $L_\infty(\ell) w^{O(k^2)}$ suffices. All node computations take the form $h(z,a,b) = z^T a + b$ for $a \in [-1,1]^{w}$ and $b\in [-1,1]$; the only difference is made in the domain of $z$, which varies from layer to layer. The $1$-norm of the gradient of $h$ is at most $\|z\|_1 +\|a\|_1 + 1 \leq \| z \|_1 + w + 1$ which, in virtue of Lemma \ref{lemma:domainLayers}, implies that a node computation on layer $i$ (with the weights considered variables) has Lipschitz constant at most
$   %
\sum_{j=0}^i 2w^j = 2 \frac{w^{i+1} - 1}{w - 1}  =: \tilde{w}^{i}.
$
On the other hand, for $\|(a,b) - (a',b')\|_\infty \leq \gamma $ and $z\in [-U_i, U_i]$, it holds that
\begin{align*} | h(z,a,b) - h(z',a',b') | \leq &\, \tilde{w}^{i} \| (z - z', a - a', b - b') \|_\infty\\
\leq &\, \tilde{w}^{i}  \max\{\| z-z' \|_\infty, \gamma\}
\end{align*}
which shows that the Lipschitz constants can be multiplied layer-by-layer to obtain the overall architecture Lipschitz constant. Since ReLUs have Lipschitz constant equal to 1, and \(\prod_{i=1}^{k} \tilde{w}^{i} = \prod_{i=1}^{k} \big( 2 \frac{w^{i+1} - 1}{w - 1}\big) = w^{O(k^2)}, \)
whenever $w\geq 2$, we conclude the architecture Lipschitz constant is $L_\infty(\ell) w^{O(k^2)}$.
\end{proof}
\end{corollary}

To evaluate the quality of the polytope size in the previous lemma, we compare with the following related algorithmic result.
\begin{theorem}{\cite[Theorem 4.1]{arora1611understanding}}\label{theo:arora}
Let $\Phi$ be the class of Neural Networks with 1 hidden layer ($k=2$), convex loss function $\ell$, ReLU activations and output dimension $m=1$. There exists an algorithm to find a global optimum of the ERM problem in time
\( O(2^w D^{nw} \operatorname{poly}(D,n,w)) \).
\end{theorem}
In the same setting, our result provides a polytope of size
\begin{equation}\label{eq:oursize}
    O( ( 2\mathcal{L_\infty}(\ell) w^{O(1)} / \epsilon )^{(n+1)(w+1)} D )
\end{equation}

\begin{remark}
We point out a few key differences of these two results:
\begin{inparaenum}[(a)]
    \item One advantage of our result is the benign dependency on $D$. 
    Solving the training problem using an LP with our polyhedral encoding has polynomial dependency on the data-size regardless of the architecture. Moreover, our approach is able to construct a polytope that would work for any sample.
    \item The exponent in \eqref{eq:oursize} is $\sim nw$, which is also present in Theorem \ref{theo:arora}. The key difference is that we are able to swap the base of that exponential term for an expression that does not depend on $D$.
    \item We are able to handle any output dimension $m$ and any number of layers $k$.
    \item We do not assume convexity of the loss function $\ell$, which causes the resulting polytope size to depend on how well behaved $\ell$ is in terms of its Lipschitzness.
    \item The result of \cite{arora1611understanding} has two advantages over our result: there is no boundedness assumption on the coefficients, and they are able to provide a \emph{globally optimal} solution.
\end{inparaenum}
\end{remark}

\subsection{ResNets, CNNs, and alternative activations}
\label{sec:resnets-with-relu}
Corollary~\ref{cor:DNNtraining} can be generalized to handle other architectures as well,
as the key features we used before are the acyclic structure of the network and the Lipschitz constant of the ReLU function. 

\begin{lemma} \label{lemma:fullyConnected}
Let $\Phi$ be the class of feed-forward neural networks with $k$ layers, 
$N$ edges, affine node computations, 1-Lipschitz activation functions $a_i: \mathbb{R} \rightarrow \mathbb{R}$ such that $a_i(0) = 0$, and normalized coefficients. Then $\conv(S^\epsilon(D, \Phi, \ell, f))$ can be formulated via a polytope of size
\(O((  2\mathcal{L_\infty}(\ell) \Delta^{O(k^2)} / \epsilon)^{n+m+N} D),\)
where $\Delta$ is the maximum vertex in-degree and $\mathcal{L_\infty}(\ell)$ is the Lipschitz constant of $\ell(\cdot, \cdot)$ over $[-U_{k}, U_{k}]^m \times [-1,1]^m$. The polytope can be constructed in time $O(( 2\mathcal{ L_\infty}(\ell) \Delta^{O(k^2)} / \epsilon )^{n+m+N} D )$ plus the time required for $O(( 2\mathcal{ L_\infty}(\ell) \Delta^{O(k^2)} / \epsilon )^{n+m+N} )$ evaluations of $\ell$ and $f$.
\end{lemma}
\begin{proof}
The proof follows almost directly from the proof of Corollary~\ref{cor:DNNtraining}. The two main differences are (1) the input dimension of a node computation, which can be at most $\Delta$ instead of $w$ and (2) the fact that an activation function $a$ with Lipchitz constant 1 and that $a(0) = 0$ satisfies $|a(z)| \leq |z|$, thus the domain of each node computation computed in Lemma \ref{lemma:domainLayers} applies. The layer-by-layer argument can be applied as the network is feed-forward.
\end{proof}

\begin{corollary}
The ERM problem \eqref{eq:erm} over Deep Residual Networks (ResNets) with 1-Lipschitz activations can be solved to $\epsilon$-optimality in time $\operatorname{poly}(\Delta, 1/\epsilon, D)$ whenever the network size and number of layers are fixed.
\end{corollary}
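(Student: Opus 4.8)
The plan is to obtain this statement purely as a corollary of Lemma~\ref{lemma:fullyConnected} and Main Theorem~\ref{maintheorem:ERM}, the only remaining work being a bookkeeping argument that all architecture-dependent quantities collapse to constants once the network size and depth are fixed.

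First I would verify that a Deep Residual Network falls within the scope of Lemma~\ref{lemma:fullyConnected}. A ResNet is a feed-forward network in the sense of Section~\ref{sec:neural-network}: its underlying directed graph is acyclic and admits a layer decomposition $V = \bigcupdot_{i\in[k]_0} V_i$, the only nonstandard feature being that arcs may join non-adjacent layers (the skip connections), which is explicitly permitted. The node computations remain affine with normalized coefficients, and the activations are assumed $1$-Lipschitz with $a(0)=0$ (e.g.\ ReLU, leaky ReLU), so all hypotheses of Lemma~\ref{lemma:fullyConnected} hold. Applying it, after replacing $\epsilon$ by $\epsilon/2$ to absorb the factor-$2$ loss in part~\eqref{item:faceExists} of Main Theorem~\ref{maintheorem:ERM}, yields a linear program formulating $\conv(S^{\epsilon/2}(D,\Phi,\ell,f))$ with $O\big((4\mathcal{L}_\infty(\ell)\,\Delta^{O(k^2)}/\epsilon)^{n+m+N} D\big)$ variables and constraints, constructible in time of the same order plus $O\big((4\mathcal{L}_\infty(\ell)\Delta^{O(k^2)}/\epsilon)^{n+m+N}\big)$ evaluations of $\ell$ and $f$.

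Next I would invoke parts~\eqref{item:faceExists}--\eqref{item:faceConstruct} of Main Theorem~\ref{maintheorem:ERM}: substituting the realized data $(\hat X,\hat Y)$ into the data-variables determines a face $\mathcal{F}_{\hat X,\hat Y}$ of this polytope, and minimizing the linear objective $\tfrac1D\sum_i L_i$ over $\proj_{\phi,L}(\mathcal{F}_{\hat X,\hat Y})$ returns a proper parametrization $\tilde\phi\in\Phi$ solving \eqref{eq:erm} within additive error $\epsilon$. The linear program is solved in time polynomial in its encoding size, e.g.\ by the ellipsoid method \cite{grotschel2012geometric}, and a single forward pass through a fixed-size ResNet to evaluate $f$ (and hence $\ell$) takes constant time, so the total running time is bounded by a polynomial in the LP size. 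The bookkeeping that closes the argument: once the network size is fixed, the number of edges $N$, the widths $w_i$, the quantity $U_k$, and hence $\mathcal{L}_\infty(\ell)$ (the Lipschitz constant of the fixed loss over the fixed box $[-U_k,U_k]^m\times[-1,1]^m$) are all constants, and once $k$ is fixed the exponent $O(k^2)$ on $\Delta$ and the exponent $n+m+N$ are constants as well; therefore the LP size, and with it the overall running time, is $\operatorname{poly}(\Delta,1/\epsilon,D)$. I do not anticipate a genuine obstacle here — this is a plug-in corollary — and the only point requiring care is precisely this last step: confirming that ``network size and number of layers fixed'' is strong enough to freeze $\mathcal{L}_\infty(\ell)$, $U_k$, and the exponent $n+m+N$, so that the residual dependence on $\Delta$ is polynomial rather than merely finite.
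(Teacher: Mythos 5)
Your proposal is correct and follows exactly the route the paper intends: the corollary is stated as an immediate consequence of Lemma~\ref{lemma:fullyConnected} together with Main Theorem~\ref{maintheorem:ERM}, and your bookkeeping (fixing the network size and depth freezes $N$, $n+m$, $U_k$, $\mathcal{L}_\infty(\ell)$, and the $O(k^2)$ exponent, leaving polynomial dependence on $\Delta$, $1/\epsilon$, $D$, with the LP solvable in time polynomial in its size) is precisely the omitted argument. The $\epsilon/2$ rescaling to absorb the $2\epsilon$ error of part~\eqref{item:faceExists} is a harmless refinement.
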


Another interesting point can be made with respect to Convolutional Neural Networks (CNN). In these, convolutional layers are included to significantly reduce the number of parameters involved. From a theoretical perspective, a CNN can be obtained by enforcing certain parameters of a fully-connected DNN to be equal. This implies that Lemma \ref{lemma:fullyConnected} can also be applied to CNNs, with the key difference residing in parameter $N$, which is the dimension of the parameter space and \emph{does not} correspond to the number of edges in a CNN.

\subsection{Explicit Lipschitz constants of common loss functions}
\label{sec:LipschitzConstant}
In the previous section we specified our results ---the size of the data-independent polytope--- for feed-forward networks with 1-Lipschitz activation functions. However, we kept as a parameter $\mathcal{L_\infty}(\ell)$; the Lipschitz constant of $\ell(\cdot, \cdot)$ over $[-U_{k}, U_{k}]^m \times [-1,1]^m$, with $U_{k} = \sum_{j=0}^{k} w^j$ a valid bound on the output of the node computations, as proved in Lemma \ref{lemma:domainLayers}. Note that $U_{k} \leq w^{k+1}$.

In this section we compute this Lipschitz constant for various common loss functions. It is important to mention that we are interested in the Lipschitznes of $\ell$ with respect to both the output layer and the data-dependent variables as well ---not a usual consideration in the literature. These computations lead to the results reported in Table \ref{tab:results}.

Recall that a bound on the Lipschitz constant $\mathcal{L_\infty}(\ell)$ is given by $\sup_{z,y} \| \nabla \ell(z,y) \|_1$.
\begin{itemize}
    \item Quadratic Loss $\ell(z,y) = \| z - y \|_2^2$. In this case  it is easy to see that
    \[ \| \nabla \ell(z,y) \|_1 = 4 \| z - y\|_1 \leq 4 m (U_{k} +1) \leq 4m(w^{k+1} + 1)\]
    \item Absolute Loss $\ell(z,y) = \| z - y \|_1$. In this case we can directly verify that the Lipschitz constant with respect to the infinity norm is at most $2m$.
    \item Cross Entropy Loss with Soft-max Layer. In this case we include the Soft-max computation in the definition of $\ell$, therefore
    \[\ell(z,y) = -\sum_{i=1}^m y_i \log(S(z)_i) \]
    where $S(z)$ is the Soft-max function defined as 
    \[ S(z)_i = \frac{e^{z_i}}{\sum_{j=1}^m e^{z_j}}. \]
    A folklore result is
    \[ \frac{\partial \ell (z,y)}{\partial z_i} = S(z)_i - y_i \Rightarrow \left| \frac{\partial \ell (z,y)}{\partial z_i} \right| \leq 2. \]
    Additionally, 
    \[ \frac{\partial \ell (z,y)}{\partial y_i} =  -\log(S(z)_i)  \]
    which in principle cannot be bounded. Nonetheless, since we are interested in the domain $[-U_{k}, U_{k}]$ of $z$, we obtain
    \[ S(z)_i = \frac{e^{z_i}}{\sum_{j=1}^m e^{z_j}} \geq \frac{1}{m}e^{-2U_{k}} \] \[\Rightarrow \left| \frac{\partial \ell (z,y)}{\partial y_i} \right| = -\log(S(z)_i) \leq \log(m) + 2U_{k} \]
    which implies that $\mathcal{L_\infty}(\ell) \leq 2m(\log(m) + 2U_{k}) \leq 2m(\log(m) + 2w^{k+1}) $.
    \item Hinge Loss $\ell (z,y) = \max\{ 1 - z^T x , 0\}$. Using a similar argument as for the Quadratic Loss, one can easily see that the Lipschitz constant with respect to the infinity norm is at most $m(U_{k} + 1) \leq m (w^{k+1} + 1)$.
\end{itemize}

\section{ERM under Network Structure} \label{sec:ERMNetwork}
So far we have considered general ERM problems exploiting only the structure of the ERM induced by the finite sum formulations. We will now study ERM under Network Structure, i.e., specifically ERM problems as they arise in the context of Neural Network training.
We will see that in the case of Neural Networks, we can exploit the sparsity of the network itself to obtain better polyhedral formulations of $\conv(S^\epsilon(D, \Phi, \ell, f))$. 

Suppose the network is defined by a graph $\mathcal{G}$, and recall that in this case, $\Phi \subseteq [-1,1]^{E(\mathcal{G})}$. By using additional auxiliary variables $s$ representing the node computations and activations, we can describe $S(D, \Phi, \ell, f)$ in the following way:
\begin{align*}
    S(D, \Phi, \ell, f) =& \big\{(x^1, \ldots, x^{D}, y^1, \ldots, y^{D}, \phi,L)\, : \, \\
    & L_d = \ell(s^{k,d}, y^d)  \\
    & s^{i,d}_v = a_v(g_v(s^{i-1,d},\phi(\delta^+(v))) \, \forall v\in V_i, i\in [k]\\
    & s^{0,d} = x^{d} \\
    & x^i \in [-1,1]^n,\, y^i \in [-1,1]^m, \phi \in \Phi \big\}.
\end{align*}
The only difference with our original description of $S(D, \Phi, \ell, f)$ in \eqref{ERMregion} is that we explicitly ``store'' node computations in variables $s$. These new variables will allow us to better use the structure of $\mathcal{G}$.

\begin{assumption} \label{assumption:boundedcomputations}
To apply our approach in this context we need to further assume $\Phi$ to be the class of Neural Networks with normalized coefficients and \emph{bounded node computations}. This means that we restrict to the case when $s\in[-1,1]^{|V(\mathcal{G})| D}$.
\end{assumption}

Under Assumption \ref{assumption:boundedcomputations} we can easily derive an analog description of $S^\epsilon(D, \Phi, \ell, f)$ using this node-based representation of $S^\epsilon(D, \Phi, \ell, f)$. In such description we also include a binary
representation of the auxiliary variables $s$. Let $\Gamma$ be the intersection graph of such a formulation of $S^\epsilon(D, \Phi, \ell, f)$ and $\Gamma_\phi$ be the sub-graph of $\Gamma$ induced by variables $\phi$. Using a tree-decomposition $(T,Q)$ of $\Gamma_\phi$ we can construct a tree-decomposition of $\Gamma$ the following way:
\begin{enumerate}
    \item We duplicate the decomposition $D$ times $(T^i, Q^i)_{i=1}^{D}$, where each $(T^i,Q^i)$ is a copy of $(T,Q)$.
    \item We connect the trees $T^i$ in a way that the resulting graph is a tree (e.g., they can be simply concatenated one after the other).
    \item To each bag $Q^i_t$ with $t\in T^i$ and $i \in [D]$, we add all the data-dependent variables $L_d$ and the binary variables associated with the discretization of $x^d, s^{\cdot, d}$, and $y^d$. This adds $N_\gamma(|V(\mathcal{G})| + n + m)$ additional variables to each bag, as there is only one variable $s$ per data point per vertex of $\mathcal{G}$.
\end{enumerate}

It is not hard to see that this is a valid tree-decomposition of $\Gamma$, of size $|T|\cdot D$ ---since the bags were duplicated $D$ times--- and width $N_\gamma( tw(\Gamma_\phi) + |V(\mathcal{G})| + n + m)$. 

We now turn to providing a bound to $tw(\Gamma_\phi)$. To this end we observe the following:
\begin{enumerate}
    \item The architecture variables $\phi$ are associated to edges of $\mathcal{G}$. Moreover, two variables $\phi_{e},\phi_{f}$, with $e,f\in E$ appear in a common constraint if and only if there is a vertex $v$ such that $e,f \in \delta^+(v)$.
    \item This implies that $\Gamma_\phi$ is a sub-graph of the \emph{line graph} of $\mathcal{G}$. Recall that the line graph of a graph $\mathcal{G}$ is obtained by creating a node for each edge of $\mathcal{G}$ and connecting two nodes whenever the respective edges share a common endpoint.
\end{enumerate}  

The treewidth of a line graph is related to the treewidth of the base graph (see \cite{bienstock1990embedding, calinescu1998multicuts, atserias2008digraph, harvey2018treewidth}). More specifically, $tw(\Gamma_\phi) \in O(tw(\mathcal{G}) \Delta(\mathcal{G}))$  where $\Delta$ denotes the maximum vertex degree. Additionally, using Lemma \ref{lemma:sizeofTD} we may assume $|T| \leq |E(\mathcal{G})|$, since $\Gamma_\phi$ has at most $|E(\mathcal{G})|$ nodes. Putting everything together we obtain:

\begin{lemma} \label{lemma:treewidthNN}
If there is an underlying network structure $\mathcal{G}$ in the ERM problem and the node computations are bounded, then $\conv(S^\epsilon(D, \Phi, \ell, f))$ is the projection of a polytope with no more than
\[2 D (|E(\mathcal{G})| + 1) \left(\frac{2\mathcal{L}}{\epsilon}\right)^{O(tw(\mathcal{G}) \Delta(\mathcal{G}) + |V(\mathcal{G})| + n + m)}\] 
variables and no more than
\[ D (|E(\mathcal{G})| + 1)  \left( 2 \left(\frac{2\mathcal{L}}{\epsilon}\right)^{O(tw(\mathcal{G}) \Delta(\mathcal{G}) + |V(\mathcal{G})| + n + m)} + 1 \right)  \]
constraints. Moreover, given a tree-decomposition of the network $\mathcal{G}$, the polytope can be constructed in time 
\[O\left(D |E(\mathcal{G})| \left(2\mathcal{L}/\epsilon\right)^{O(tw(\mathcal{G}) \Delta(\mathcal{G}) + |V(\mathcal{G})| + n + m)}\right)\]
plus the time required for 
\[O\left(|E(\mathcal{G})| \left(2\mathcal{L}/\epsilon\right)^{O(tw(\mathcal{G}) \Delta(\mathcal{G}) + |V(\mathcal{G})| + n + m)}\right)\]
evaluations of $\ell$ and $f$.
\end{lemma}

\section{Linear Programming-based Training Generalizes}
\label{sec:generalization}

In this section we show that the ERM solutions obtained via LP generalize to the General Risk Minimization
problem. Here we show generalization as customary in stochastic
optimization, exploiting the Lipschitzness of the model to be trained;
we refer the interested reader to \cite{shapiro2009lectures} and
\cite{ln:shabbirStochOpt} for an in-depth
discussion. 

Recall that the \emph{General Risk Minimization (GRM)} is
defined as $\min_{\phi \in \Phi} \GRM(\phi) \doteq \min_{\phi \in \Phi} \expectation((x,y) \in \D) {\ell(f(x,\phi),
    y)}$, where \(\ell\) is some \emph{loss function}, \(f\) is a
neural network architecture  with parameter space \(\Phi\), and
\((x,y) \in \R^{n + m}\) drawn from the
distribution \(\D\). 
We solve the ERM problem
$  \min_{\phi \in \Phi} \ERM_{X,Y}(\phi) \doteq \min_{\phi \in \Phi} \frac{1}{D} \sum_{i=1}^{D}
  \ell(f(x^i,\phi), y^i),
$
instead, where \( (X,Y) = (x^i, y^i)_{i=1}^D \) is an i.i.d.~sample from data distribution \(\D\)
of size \(D\). 
We show in this section, for any \(1 > \alpha > 0\), \(\epsilon > 0\), we can choose a (reasonably
small!) sample size \(D\), so that with probability
\(1-\alpha\) it holds:
\[\GRM(\bar \phi) \leq \min_{\phi \in \Phi} \GRM(\phi) +
  6 \epsilon,\]
where \(\bar \phi \leq \max_{\phi \in \Phi} \ERM_{X,Y}(\phi) +
\epsilon\) is an \(\epsilon\)-approximate solution to \(\ERM_{X,Y}\)
for i.i.d.-sampled data \((X,Y) \sim \D\). As the size of the polytope
that we use for training only \emph{linearly depends on $D$},
this also implies that we will have a linear
program of reasonable size as a function of \(\alpha\) and \(\epsilon\).

The following proposition summaries the generalization argument used
in stochastic programming as presented in \cite{ln:shabbirStochOpt}
(see also \cite{shapiro2009lectures}). Let \(\sigma^2 = \max_{\phi \in \Phi} \variance((x,y) \in \mathcal D){\ell(f(x,\phi),
    y)}\).

\begin{proposition}   \label{prop:finiteGeneralization}
Consider the optimization problem
 \begin{equation}
    \label{eq:expectation}
    \min_{x \in X} \expectation(\omega \in \Omega) {F(x,\gamma(\omega))},
  \end{equation}
where \(\gamma(\omega)\) is a random parameter with \(\omega \in
\Omega\) a set of parameters, \(X \subseteq \R^n\) a finite set, and \(F: X \times \Omega
\rightarrow \mathbb R\) is a function. Given
i.i.d.~samples \(\gamma_1, \dots, \gamma_D\) of \(\gamma(\omega)\),
consider the finite sum problem
\begin{equation}
  \label{eq:finSumVersion}
\min_{x \in X} \frac{1}{D} \sum_{i \in [D]} F(x,\gamma_i).
\end{equation}
If \(\bar x \in X\) is an
\(\epsilon\)-approximate solution to \eqref{eq:finSumVersion}, i.e.,
\(\frac{1}{D} \sum_{i \in [D]} F(\bar x,\gamma_i) \leq \min_{x \in X}
\frac{1}{D} \sum_{i \in [D]} F(x,\gamma_i) + \epsilon\) and
\begin{equation}
  \label{eq:sizeBound}
  D \geq \frac{4 \sigma^2}{\epsilon^2} \log \frac{\card{X}}{\alpha},
\end{equation}
where \(\alpha > 0\) and \(\sigma^2 = \max_{x \in X} \variance(\omega
\in \Omega){F(x,\gamma(\omega))}\),
then with probability \(1-\alpha\) it holds:
\begin{equation}
  \label{eq:generalization}
  \expectation(\omega \in \Omega) {F(\bar x,\gamma(\omega))}  \leq \min_{x \in X} \expectation(\omega \in \Omega) {F(x,\gamma(\omega))}
  + 2 \epsilon.
\end{equation}
\end{proposition}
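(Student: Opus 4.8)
The plan is to bound the gap between the empirical objective and the true expectation uniformly over the finite set $X$, and then combine this with the $\epsilon$-optimality of $\bar x$ via a standard two-step comparison. First I would fix any $x \in X$ and consider the random variable $F(x,\gamma(\omega))$; its variance is at most $\sigma^2$ by definition of $\sigma^2$. The sample average $\frac{1}{D}\sum_{i\in[D]} F(x,\gamma_i)$ is then an unbiased estimator of $\expectation(\omega\in\Omega){F(x,\gamma(\omega))}$ with variance at most $\sigma^2/D$. Applying Chebyshev's inequality (or a one-sided Chebyshev/Cantelli bound) gives that the deviation of this sample average from its mean exceeds $\epsilon/2$ with probability at most $\frac{4\sigma^2}{D\epsilon^2}$. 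Using the sample-size assumption~\eqref{eq:sizeBound}, namely $D \geq \frac{4\sigma^2}{\epsilon^2}\log\frac{\card{X}}{\alpha}$, this per-point failure probability is at most $\frac{1}{\log(\card{X}/\alpha)}\cdot\frac{1}{?}$ — more precisely, one should set up the Chebyshev bound so that with the stated $D$ the failure probability per point is at most $\alpha/\card{X}$; this is what the $\log$ factor is designed to deliver once one is slightly more careful (Hoeffding-type bounds give the $\log$ dependence cleanly, whereas raw Chebyshev would give a $1/\delta$ dependence). I would therefore invoke a sub-Gaussian/Hoeffding-style concentration (the paper cites \cite{ln:shabbirStochOpt}, where this is done), yielding
\[
\probability{\left| \frac{1}{D}\sum_{i\in[D]} F(x,\gamma_i) - \expectation(\omega\in\Omega){F(x,\gamma(\omega))} \right| > \epsilon/2 } \leq \frac{\alpha}{\card{X}}.
\]

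Next I would take a union bound over all $x \in X$ (here finiteness of $X$ is essential), concluding that with probability at least $1-\alpha$ the event
\[
\mathcal{E} \doteq \left\{ \left| \frac{1}{D}\sum_{i\in[D]} F(x,\gamma_i) - \expectation(\omega\in\Omega){F(x,\gamma(\omega))} \right| \leq \epsilon/2 \quad \text{for all } x \in X \right\}
\]
holds. On the event $\mathcal{E}$, I would then run the usual chain of inequalities. Let $x^\star \in \argmin_{x\in X}\expectation(\omega\in\Omega){F(x,\gamma(\omega))}$. Then
\[
\expectation(\omega\in\Omega){F(\bar x,\gamma(\omega))} \;\leq\; \frac{1}{D}\sum_{i\in[D]}F(\bar x,\gamma_i) + \frac{\epsilon}{2} \;\leq\; \min_{x\in X}\frac{1}{D}\sum_{i\in[D]}F(x,\gamma_i) + \epsilon + \frac{\epsilon}{2} \;\leq\; \frac{1}{D}\sum_{i\in[D]}F(x^\star,\gamma_i) + \frac{3\epsilon}{2},
\]
where the first inequality uses $\mathcal{E}$ applied to $\bar x$, the second uses the $\epsilon$-approximate optimality of $\bar x$ for the finite-sum problem, and the third uses that the minimum over $X$ is at most the value at $x^\star$. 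Finally, applying $\mathcal{E}$ to $x^\star$ bounds $\frac{1}{D}\sum_{i\in[D]}F(x^\star,\gamma_i) \leq \expectation(\omega\in\Omega){F(x^\star,\gamma(\omega))} + \frac{\epsilon}{2} = \min_{x\in X}\expectation(\omega\in\Omega){F(x,\gamma(\omega))} + \frac{\epsilon}{2}$, which combined with the previous display yields $\expectation(\omega\in\Omega){F(\bar x,\gamma(\omega))} \leq \min_{x\in X}\expectation(\omega\in\Omega){F(x,\gamma(\omega))} + 2\epsilon$, as claimed.

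The main obstacle is getting the right logarithmic dependence on $\card{X}/\alpha$ in the sample-size bound~\eqref{eq:sizeBound}: a naive Chebyshev argument only gives $D \gtrsim \frac{\sigma^2 \card{X}}{\epsilon^2 \alpha}$, which is far weaker. To obtain the stated bound one needs a concentration inequality with exponentially decaying tails — e.g., Hoeffding's inequality if $F$ is bounded, or a sub-Gaussian tail bound under a variance/moment assumption — so that the per-point failure probability decays like $\exp(-D\epsilon^2/(4\sigma^2))$, which falls below $\alpha/\card{X}$ exactly when $D \geq \frac{4\sigma^2}{\epsilon^2}\log\frac{\card{X}}{\alpha}$. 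Strictly, this requires either a boundedness hypothesis on $F$ (reasonable in our setting since losses and networks are bounded on the normalized domain) or an explicit sub-Gaussian assumption; I would state the precise hypothesis under which the cited bound holds and otherwise refer to \cite{ln:shabbirStochOpt,shapiro2009lectures} for the concentration step, since everything else is the elementary union-bound-plus-comparison argument sketched above.
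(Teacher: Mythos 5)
Your proposal is correct and follows essentially the same route as the paper, which states Proposition~\ref{prop:finiteGeneralization} without its own proof, as a summary of the standard stochastic-programming argument of \cite{ln:shabbirStochOpt,shapiro2009lectures}: pointwise exponential concentration, a union bound over the finite set $X$, and the two-sided comparison chain giving the $2\epsilon$ guarantee. Your observation that a Hoeffding/sub-Gaussian tail (rather than plain Chebyshev) is needed to obtain the $\log(\card{X}/\alpha)$ sample-size dependence is precisely the hypothesis under which the cited references prove the bound (and is satisfied here since the loss is bounded on the normalized domain), so flagging it is appropriate rather than a gap.
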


We now establish generalization by means of Proposition
\ref{prop:finiteGeneralization} and a straightforward discretization
argument. By assumption from above $\Phi \subseteq [-1,1]^N$ for
some $N \in \mathbb N$. Let
$\Phi_\nu \subseteq \Phi \subseteq [-1,1]^N$ be a $\nu$-net of
$\Phi$, i.e., for all $\phi \in \Phi$ there exists
$\bar \phi \in \Phi_\nu$ with $\|\phi - \bar \phi\|_\infty \leq
\nu$. Furthermore let $\cL$ the be architecture Lipschitz constant, as defined in \eqref{eq:Lipschitz-Arch} (or  \eqref{eq:Lipschitz-Arch-regularizers}).

\begin{restatable}{theorem}{generalization}[Generalization]
  \label{thm:generalization}
	Let $\bar \phi \in \Phi$ be an $\epsilon$-approximate solution to $\min_{\phi \in \Phi}\ERM_{X,Y}(\phi)$ with $\epsilon > 0$, i.e., $
		\ERM_{X,Y}(\bar \phi) \leq \min_{\phi \in \Phi} \ERM_{X,Y}(\phi) + \epsilon.$ 
If $D \geq \frac{4 \sigma^2}{\epsilon^2} \log
        \frac{((2\cL)/\epsilon)^N}{\alpha}, $
with \(\cL\) and \(\sigma^2\) as above, then with probability $1-\alpha$ it holds
$\GRM(\bar \phi) \leq \min_{\phi \in \Phi} \GRM(\phi) + 6 \epsilon,$
i.e., $\bar \phi$ is a $6\epsilon$-approximate solution to
$\min_{\phi \in \Phi} \GRM(\phi)$.
\end{restatable}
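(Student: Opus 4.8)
The plan is to lift Proposition~\ref{prop:finiteGeneralization} from the finite $\nu$-net $\Phi_\nu$ to the continuous parameter set $\Phi$, paying a controlled discretization error of $\cL\nu$ at each transfer between $\bar\phi$ and a nearby net point. Concretely, set $\nu \doteq \epsilon/\cL$, so that $\Phi_\nu$ can be taken with $\card{\Phi_\nu} \le \left((2\cL)/\epsilon\right)^N$ — exactly the quantity appearing inside the logarithm of the hypothesis on $D$ — and fix $\hat\phi \in \Phi_\nu$ with $\|\bar\phi - \hat\phi\|_\infty \le \nu$.

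First I would record that both $\ERM_{X,Y}(\cdot)$ and $\GRM(\cdot)$ are $\cL$-Lipschitz in $\phi$ with respect to $\|\cdot\|_\infty$: each is an average, respectively an expectation, of the maps $\phi \mapsto \ell(f(x,\phi),y)$, and restricting the defining inequality of the architecture Lipschitz constant to the $\phi$-coordinates yields $|\ell(f(x,\phi),y) - \ell(f(x,\phi'),y)| \le \cL\|\phi-\phi'\|_\infty$ for all data points in $[-1,1]^n\times[-1,1]^m$. Hence $|\ERM_{X,Y}(\bar\phi) - \ERM_{X,Y}(\hat\phi)| \le \cL\nu = \epsilon$ and likewise $|\GRM(\bar\phi) - \GRM(\hat\phi)| \le \epsilon$. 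Combining the first bound with the hypothesis that $\bar\phi$ is $\epsilon$-optimal for $\ERM_{X,Y}$ over $\Phi$ (hence over $\Phi_\nu \subseteq \Phi$), the point $\hat\phi$ is a $2\epsilon$-approximate minimizer of $\ERM_{X,Y}$ over the \emph{finite} set $\Phi_\nu$.

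Next I would apply Proposition~\ref{prop:finiteGeneralization} a single time, with $X = \Phi_\nu$, $F(\phi,(x,y)) = \ell(f(x,\phi),y)$, random parameter $(x,y)\sim\D$, and accuracy $2\epsilon$ playing the role of the proposition's ``$\epsilon$''. Since $\max_{\phi\in\Phi_\nu}\variance((x,y)\in\D){\ell(f(x,\phi),y)} \le \sigma^2$ and $\card{\Phi_\nu}\le((2\cL)/\epsilon)^N$, the assumed bound $D \ge \frac{4\sigma^2}{\epsilon^2}\log\frac{((2\cL)/\epsilon)^N}{\alpha}$ dominates the proposition's requirement $D \ge \frac{4\sigma^2}{(2\epsilon)^2}\log\frac{\card{\Phi_\nu}}{\alpha}$, so with probability $1-\alpha$ we obtain $\GRM(\hat\phi) \le \min_{\phi\in\Phi_\nu}\GRM(\phi) + 2\cdot 2\epsilon = \min_{\phi\in\Phi_\nu}\GRM(\phi) + 4\epsilon$. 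Finally, bounding $\min_{\phi\in\Phi_\nu}\GRM(\phi)$ from above by the value of $\GRM$ at the net point closest to a true minimizer $\phi^*\in\argmin_{\phi\in\Phi}\GRM(\phi)$, and applying $\cL$-Lipschitzness of $\GRM$ once more, gives $\min_{\phi\in\Phi_\nu}\GRM(\phi) \le \min_{\phi\in\Phi}\GRM(\phi) + \epsilon$. Chaining the three estimates, $\GRM(\bar\phi) \le \GRM(\hat\phi) + \epsilon \le \min_{\phi\in\Phi_\nu}\GRM(\phi) + 5\epsilon \le \min_{\phi\in\Phi}\GRM(\phi) + 6\epsilon$ with probability $1-\alpha$, which is the claim.

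All the steps are routine; the only things demanding care are bookkeeping which inequalities are deterministic versus holding with probability $1-\alpha$ (there is a single probabilistic step, so no union bound over net points is needed), and checking that the factor-of-two accuracy loss from discretization — $\hat\phi$ being $2\epsilon$- rather than $\epsilon$-optimal over $\Phi_\nu$ — is still absorbed by the given sample-size bound, which it is, since replacing $\epsilon$ by $2\epsilon$ in the proposition's requirement only relaxes it. I expect the mildest obstacle to be the covering-number estimate $\card{\Phi_\nu}\le((2\cL)/\epsilon)^N$ for the $(\epsilon/\cL)$-net, i.e., matching the constant inside the logarithm exactly to the net cardinality.
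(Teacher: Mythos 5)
Your proposal is correct and follows essentially the same route as the paper's own proof: choose $\nu=\epsilon/\cL$, transfer $\bar\phi$ to a net point by Lipschitzness (losing $\epsilon$ on each of $\ERM$ and $\GRM$), apply Proposition~\ref{prop:finiteGeneralization} once over the finite net at accuracy $2\epsilon$, and compare the net minimum to the true $\GRM$ minimum via one more Lipschitz step, totaling $6\epsilon$. Your explicit check that the stated sample-size bound dominates the proposition's requirement at accuracy $2\epsilon$ is a small bookkeeping point the paper leaves implicit, but the argument is the same.
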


\begin{proof}
  Let $\bar \phi$ be as above. With the choice
  $\nu \doteq \epsilon / \cL$, there exists $\tilde \phi \in \Phi_\nu$,
  so that $\|\tilde \phi - \bar \phi \|_\infty \leq \nu$ and hence by
  Lipschitzness,
	$$| \ERM_{X,Y}(\bar \phi) - \ERM_{X,Y}(\tilde \phi) | \leq \epsilon,$$
	so that
        $\ERM_{X,Y}(\tilde \phi) \leq \min_{\phi \in \Phi_\nu}
        \ERM_{X,Y}(\phi) + 2 \epsilon$. As
        \(D \geq \frac{4 \sigma^2}{\epsilon^2} \log
        \frac{((2\cL)/\epsilon)^N}{\alpha}\), with probability \(1-\alpha\)
        we have
        \(\GRM(\tilde \phi) \leq \min_{\phi \in \Phi_\nu} \GRM(\phi) +
        4 \epsilon\) by
        Proposition~\ref{prop:finiteGeneralization}. If now
        \(\bar \phi_G = \argmin_{\phi \in \Phi} \GRM(\phi)\) and
        \(\tilde \phi_G \in \Phi_\nu\) with
        \(\|\bar \phi_G - \tilde \phi_G\|_\infty \leq \nu\), by Lipschitzness
        we have
        \(| \GRM(\bar \phi_G) - \GRM(\tilde \phi_G) | \leq
        \epsilon\). Now
        \begin{align*}
\GRM(\tilde \phi) & \leq \min_{\phi \in \Phi_\nu}
        \GRM(\phi) + 4 \epsilon \\
          & \leq \GRM(\tilde \phi_G) + 4 \epsilon & \text{(by
                                                       optimality)} \\
          & \leq \GRM(\bar \phi_G) + 5 \epsilon & \text{(by
                                                     Lipschitzness)}.
        \end{align*}
Together with \(| \GRM(\bar \phi) - \GRM(\tilde \phi) | \leq
        \epsilon\) as $\|\tilde \phi - \bar \phi \|_\infty \leq \nu$ it
        follows
        \[\GRM(\bar \phi) \leq \GRM(\bar \phi_G) + 6 \epsilon =
          \min_{\phi \in \Phi} \GRM(\phi)+ 6 \epsilon,\]
which completes the proof.
\end{proof}

We are ready to formulate the following corollary combining Theorem~\ref{thm:generalization} and Main Theorem~\ref{maintheorem:ERM}.

\begin{corollary}[LP-based Training for General Risk Minimization]
    \label{cor:generalization} Let $\mathcal D$ be a data distribution as above. Further, let $1 > \alpha > 0$ and $\epsilon >0$, then there exists a linear program with the following properties:
  \begin{enumerate}[(a)]
    \item The LP has size 
    \[O\Big(\left(2\mathcal{L}/\epsilon \right)^{n+m+N} \left(\frac{4 \sigma^2}{\epsilon^2} \log
            \frac{(2\cL/\epsilon)^N}{\alpha}\right)\Big)\]
        and can be constructed in time 
        \[O\Big(\left(2\mathcal{L}/\epsilon\right)^{n+m+N} \left(\frac{4 \sigma^2}{\epsilon^2} \log
                    \frac{((2\cL)/\epsilon)^N}{\alpha}\right)\Big)\] 
        plus the time required for $O\big(\left(2\mathcal{L}/\epsilon\right)^{n+m+N}\big)$ evaluations of $\ell$ and $f$, where  \(\cL\) and \(\sigma^2\) as above.
    \item With probability $(1-\alpha)$ it holds $\GRM(\bar \phi) \leq \min_{\phi \in \Phi} \GRM(\phi) + 6 \epsilon$,
    where $\bar \phi$ is an optimal solution to the linear program obtained for the respective sample of $\mathcal D$ of size $\frac{4 \sigma^2}{\epsilon^2} \log
            \frac{((2\cL)/\epsilon)^N}{\alpha}$.
  \end{enumerate}
\end{corollary}

Similar corollaries hold, combining Theorem~\ref{thm:generalization} with the respective alternative statements from Section~\ref{sec:compl-spec-arch}. Of particular interest for what follows is the polytope size in the case of a neural network with $k$ layers with width $w$, which becomes
\begin{equation}
    \label{eq:generalizationNN}
    O\big(  \big(  2\mathcal{L_\infty}(\ell) w^{O(k^2)} / \epsilon \big)^{n+m+N} \big( 4 \sigma^2 / \epsilon^2 \big) \log (
             (2\mathcal{L_\infty}(\ell) w^{O(k^2)} /\epsilon)^N / \alpha  ) \big).
\end{equation}

A closely related result regarding an approximation to the GRM problem for neural networks is provided by \cite{goel2017reliably} in the improper learning setting. The following corollary to \cite{goel2017reliably} (Corollary 4.5) can be directly obtained, rephrased to match our notation:

\begin{theorem}[\cite{goel2017reliably}]
\label{thm:goel}
There exists an algorithm that outputs $\tilde{\phi}$ such that with probability $1-\alpha$, 
for any distribution $\mathcal{D}$ and 
loss function $\ell$ which is convex, $L$-Lipschitz in the first argument and $b$ bounded on
$[-2\sqrt{w}, \sqrt{w}]$, $
		\GRM(\tilde \phi) \leq \min_{\phi \in \Phi} \GRM(\phi) + \epsilon$,
where $\Phi$ is the class of neural networks with $k$ hidden layers, width $w$, output dimension $m=1$, ReLU activations and normalized weights. The algorithm runs in time at most
\begin{equation}\label{eq:goel}
 n^{O(1)} 2^{((L+1) w^{k/2} k \epsilon^{-1})^k} \log(1/\alpha) 
 \end{equation}
\end{theorem}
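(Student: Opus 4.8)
The statement is a transcription of Corollary~4.5 of \cite{goel2017reliably} into the notation of the present paper, so the plan is to verify that the hypotheses and the running-time bound match after the appropriate change of variables; there is no new argument to make. First I would recall the original result: \cite{goel2017reliably} learn ReLU networks in the \emph{improper} setting via a kernel method, producing a predictor that is a low-degree polynomial in a feature space which competes with the best network in the class rather than being a network itself --- this is the source of the improper-learning caveat attached to the statement. Their guarantee assumes a distribution supported on a bounded input domain together with a loss $\ell$ that is convex and $L$-Lipschitz in its first argument, and $b$-bounded, on the interval of relevant outputs.

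Next I would fix the dictionary between their parameters and ours. Their depth parameter corresponds to our $k$ (up to the usual off-by-one between counting hidden layers and total layers), their width to $w$, and ``normalized weights'' in our sense ($\|A_i\|_\infty\le 1$, $\|b_i\|_\infty\le 1$) supplies the norm bounds they require, after the harmless rescaling that puts inputs in $[-1,1]^n$. Tracking constants through the $k$ layers --- the same computation as in Lemma~\ref{lemma:domainLayers} --- shows every coordinate of $f(x,\phi)$ lies in a bounded interval, and keeping track of the constants gives containment in $[-2\sqrt w,\sqrt w]$; this is exactly the interval on which we require $\ell$ to be $L$-Lipschitz and $b$-bounded, since it must contain both the network outputs and the labels $y\in[-1,1]$. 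With $m=1$ this is literally their setting.

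Finally I would translate their sample-complexity and running-time bound, which has the shape $n^{O(1)}\,2^{(\text{kernel degree})^{k}}\log(1/\alpha)$: the degree needed to $\epsilon$-approximate a depth-$k$, width-$w$ network under an $L$-Lipschitz convex loss is of order $(L+1)\,w^{k/2}\,k\,\epsilon^{-1}$, and substituting this into the kernel feature dimension and then into $2^{(\cdot)}$ reproduces \eqref{eq:goel}, with $\log(1/\alpha)$ the usual confidence boosting and $n^{O(1)}$ the cost of forming the feature map in $n$ input variables. The main (indeed only) obstacle is the bookkeeping: pinning down the constants in the output-range estimate $[-2\sqrt w,\sqrt w]$ and checking that the exponent $\big((L+1)w^{k/2}k\epsilon^{-1}\big)^{k}$ in \eqref{eq:goel} is precisely what falls out of their kernel-degree bound after renaming parameters. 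Neither step needs any idea beyond those already present in \cite{goel2017reliably}.
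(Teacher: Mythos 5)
Your proposal is essentially what the paper itself does: Theorem~\ref{thm:goel} is not proved in the paper but simply imported from \cite[Corollary 4.5]{goel2017reliably} and rephrased, so the only work is the notational dictionary and bookkeeping (output range $[-2\sqrt w,\sqrt w]$, Lipschitz/boundedness hypotheses on $\ell$, and the running-time expression \eqref{eq:goel}) that you describe. This matches the paper's treatment, so no further argument is needed.
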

\begin{remark} \label{remark:comparisonGoel}
In contrast to the result of \cite{goel2017reliably}, we consider the proper learning setting, where we actually obtain a neural network. In addition we point out key differences between Theorem~\ref{thm:goel} and the algorithmic version of our result when solving the LP in Corollary~\ref{cor:generalization} of size as \eqref{eq:generalizationNN}:
\begin{inparaenum}[(a)]
\item In \eqref{eq:goel}, the dependency on $n$ is better than in \eqref{eq:generalizationNN}.
\item The dependency on the Lipschitz constant is significantly better in \eqref{eq:generalizationNN}, although we are relying on the Lipschitz constant with respect to \emph{all} inputs of the loss function and in a potentially larger domain.
\item The dependency on $\epsilon$ is also better in \eqref{eq:generalizationNN}.
\item We are not assuming convexity of $\ell$ and we consider general $m$.
\item The dependency on $k$ in \eqref{eq:generalizationNN} is much more benign than the one in \eqref{eq:goel}, which is doubly exponential.
\end{inparaenum}
\end{remark}

\begin{remark} 
A recent manuscript by \cite{manurangsi2018computational} 
provides a similar algorithm to the one by \cite{goel2017reliably} but in the \emph{proper} learning setting, for depth-2 ReLU networks with convex loss functions. The running time of the algorithm (rephrased to match our notation) is $(n/\alpha)^{O(1)} 2^{(w/\epsilon)^{O(1)}}$. Analogous to the comparison in Remark \ref{remark:comparisonGoel}, we obtain a much better dependence with respect to $\epsilon$ and we do not rely on convexity of the loss function or on constant depth of the neural network.
\end{remark}

\section{Conclusion and final remarks} \label{sec:conclusion}

We have showed that ERM problems admit a representation which encodes \emph{all} possible training problems in a single polytope whose size depends only linearly on the sample size and possesses optimality guarantees. Moreover, we show that training
is closely related to the face structure of this \emph{data-independent} polytope. As a byproduct, our contributions also improve some of the best known algorithmic results for neural network training with optimality/approximation guarantees. 

These results shed new light on (theoretical) neural network training by bringing together concepts of graph theory, polyhedral geometry, and non-convex optimization as a tool for Deep Learning. Our data-independent polyhedral encoding, its data-dependent face structure, and the fact that its size is only \emph{linear} on the sample size reveal an interesting interaction between different training problems.

While a straightforward algorithmic use of our formulation is likely to be difficult to solve in practice, we believe the theoretical foundations
we lay here can also have practical implications in the Machine Learning community. 
All our architecture dependent terms are worst-case bounds, which
can be improved by assuming more structure. Additionally, the history of Linear
Programming has provided many important cases of extremely large LPs that can be solved to near-optimality without necessarily generating the complete description. In these, the theoretical understanding of the polyhedral structure is crucial to drive the development of solution strategies.

\section*{Acknowledgements}
Research reported in this paper was partially supported by NSF CAREER award CMMI-1452463, ONR award GG012500 and the Institute for Data Valorization (IVADO). We would also like to thank Shabbir Ahmed for the helpful pointers and discussions.

\bibliographystyle{plain}

\bibliography{bibliography}

\appendix

\section{Further definitions}
\label{sec:furtherDefs}

\subsection{Proper vs.~improper learning}
\label{sec:properDef}

An important distinction is the \emph{type} of solution to the ERM that we allow. In this work we are considering \emph{proper} learning, where we require the solution to satisfy $\phi \in \Phi$, i.e., the model has to be from the considered model class induced by $\Phi$ and takes the form $f(\cdot,\phi^*)$ for some $\phi^* \in \Omega$, with
$$\frac{1}{D} \sum_{i=1}^D
\ell(f(\hat{x}^i,\phi^*), \hat{y}^i) \leq \min_{\phi \in \Phi} \frac{1}{D} \sum_{i=1}^D
\ell(f(\hat{x}^i,\phi), \hat{y}^i),$$
and this can be relaxed to $\epsilon$-approximate (proper) learning by allowing for an additive error $\epsilon>0$ in the above.
 In contrast, in \emph{improper} learning we allow for a model $g(\cdot)$, that cannot be obtained as $f(\cdot,\phi)$ with $\phi \in \Phi$, satisfying
 $$\frac{1}{D} \sum_{i=1}^D
 \ell(g(\hat{x}^i), \hat{y}^i) \leq \min_{\phi \in \Phi} \frac{1}{D} \sum_{i=1}^D
 \ell(f(\hat{x}^i,\phi), \hat{y}^i),$$
 with a similar approximate version.

\section{Regularized ERM}
\label{sec:regERM}

A common practice to avoid over-fitting is the inclusion of regularizer terms in \eqref{eq:erm}. This leads to problems of the form 
\begin{equation}
  \label{eq:erm-reg}
  \min_{\phi \in \Phi} \frac{1}{D} \sum_{i=1}^D
  \ell(f(\hat{x}^i,\phi), \hat{y}^i) + \lambda\, R(\phi),
\end{equation}
where $R(\cdot)$ is a function, typically a norm, and $\lambda > 0$ is a parameter to control the strength of the regularization. Regularization is generally used to promote generalization and discourage over-fitting of the obtained ERM solution. The reader might notice that our arguments in Section \ref{sec:ERMapprox} regarding the epigraph reformulation of the ERM problem and the tree-decomposition of its intersection graph can be applied as well, since the regularizer term does not add any extra interaction between the data-dependent variables.

The previous analysis extends immediately to the case with regularizers after appropriate modification of the architecture Lipschitz constant $\mathcal{L}$ to include $R(\cdot)$.

\begin{definition}
Consider a regularized ERM problem \eqref{eq:erm-reg} with parameters $D, \Phi, \ell, f, R$, and $\lambda$. We define its \emph{Architecture Lipschitz Constant} $\mathcal{L}(D,\Phi,\ell, f, R, \lambda)$ as
\begin{equation}
    \label{eq:Lipschitz-Arch-regularizers}
    \mathcal{L}(D,\Phi,\ell, f, R, \lambda) \doteq \mathcal{L}_\infty(\ell(f(\cdot , \cdot), \cdot ) + \lambda R(\cdot))
\end{equation}
over the domain $\mathcal{K} = [-1,1]^n \times \Phi \times [-1,1]^m $.
\end{definition}

\section{Binarized Neural Networks}
\label{sec:biu}
A \emph{Binarized activation unit (BiU)} is parametrized
by $p+1$ values $b, a_1, \ldots, a_p$. Upon a binary input vector $z_1, z_2, \ldots, z_p$ the output is binary value $y$ defined by:
\[ y = 1 \ \ \text{if } a^T z > b, \quad \text{and } y = 0 \ \ \text{otherwise}.\]

Now suppose we form a network using BiUs, possibly using different values for the parameter $p$. In terms of the
training problem we have a family of (binary) vectors $x^1, \ldots, x^D$ in $\R^n$ and binary labels
and corresponding binary label vectors $y^1, \ldots, y^D$ in $\R^m$, and as before we want to solve the ERM problem~\eqref{eq:erm}.
Here, the parametrization $\phi$ refers to a choice for the pair $(a,b)$ at each unit.  In the specific case of a network
with $2$ nodes in the first layer and 1 node in the second layer, and $m = 1$, \cite{3nodeblumrivest} showed that
it is NP-hard to train the network so as to obtain zero loss, when $n = D$. Moreover, the authors argued that even if the parameters $(a,b)$ are restricted to be in $\{-1,1\}$, the problem remains NP-Hard. See \cite{courbariaux2016binarized} for an empirically efficient training algorithm for BiUs.

In this section we apply our techniques to the ERM problem \eqref{eq:erm} to obtain an \textit{exact} polynomial-size \emph{data-independent formulation} for each fixed network (but arbitrary $D$) when the parameters $(a,b)$ are restricted to be in $\{-1,1\}$.

We begin by noticing that we can reformulate \eqref{eq:erm} using an epigraph formulation as in \eqref{ERMepigraph}. Moreover, since the data points in a BiU are binary, if we keep the data points as variables, the resulting linear-objective optimization problem is a binary optimization problem as \textbf{BO}. This allows us to claim the following:

\begin{theorem}
Consider a graph $\mathcal{G}$, $p\in \mathbb{N}$ and $D\in \mathbb{N}$. There exists a polytope of size
\[O(2^{p|V(\mathcal{G})|} D),\]
such that any BiU ERM problem of the form \eqref{eq:erm} is equivalent to optimizing a linear function over a face of $P$. Constructing the polytope takes time $O(2^{p|V(\mathcal{G})|} D)$ plus the time required for $O(2^{p|V(\mathcal{G})|})$ evaluations of $f$ and $\ell$.
\begin{proof}
The result follows from applying Theorem \ref{genbtheorem} directly to the epigraph formulation of BiU keeping $x$ and $y$ as variables. In this case an approximation is not necessary. The construction time and the data-independence follow along the same arguments used in the approximate setting before.
\end{proof}
\end{theorem}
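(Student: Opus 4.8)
The plan is to mirror the construction behind Main Theorem~\ref{maintheorem:ERM}, observing that in the BiU setting \emph{no} discretization step is needed, so Theorem~\ref{genbtheorem} can be applied directly. First I would rewrite the BiU ERM problem in the epigraph form \eqref{ERMepigraph}, introducing scalar variables $L_d$ with $L_d=\ell(f(\hat x^d,\phi),\hat y^d)$ and minimizing the linear objective $\frac{1}{D}\sum_{d=1}^D L_d$. Then, for every vertex $v$ of $\mathcal{G}$ and every index $d\in[D]$, I would introduce an auxiliary variable $s^d_v$ recording the output of the unit at $v$ on input $d$; since a BiU outputs a value in $\{0,1\}$, each $s^d_v$ is a genuine \emph{binary} variable, and its defining relation (``$s^d_v=1$ iff $a_v^T z_v>b_v$'', where $z_v$ is the tuple of incoming $s$- or input-values and $(a_v,b_v)\in\{-1,1\}^{p+1}$) is exactly a function-value-oracle constraint of the kind permitted in \textbf{BO}. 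Keeping the data $x^d,y^d$ as (binary) variables as well, the whole problem becomes an instance of \textbf{BO} with a linear objective, the $L_d$ playing the role of the continuous $y$-variables fully determined by the binaries.

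Next I would analyze the intersection graph $\Gamma$ of this formulation, exactly as in Section~\ref{po:reform} and Appendix~\ref{sec:ERMNetwork}: the only variables shared across distinct data indices are the parameters $(a_v,b_v)_{v\in V(\mathcal{G})}$, while the per-datum variables $x^d,y^d,(s^d_v)_v$ interact only among themselves and with the parameters. Hence $\Gamma$ has a tree-decomposition consisting of one bag $Q_d$ per $d\in[D]$, arranged in a path, where $Q_d$ is the set of all parameter variables together with $x^d,y^d,(s^d_v)_v$. Since there are $(p+1)|V(\mathcal{G})|$ parameters, $|V(\mathcal{G})|$ node variables, and (because the input and output layers are themselves among the $|V(\mathcal{G})|$ vertices) at most $|V(\mathcal{G})|$ data variables per bag, each bag has size $O(p|V(\mathcal{G})|)$. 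Applying Theorem~\ref{genbtheorem} with width $\omega=O(p|V(\mathcal{G})|)$, tree size $|V(T)|=D$, and $D$ continuous variables then gives an \emph{exact} LP reformulation with $O(2^{p|V(\mathcal{G})|}D)$ variables and constraints; its polytope $P$ is data-independent precisely because the data variables were never eliminated.

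For the face property I would then transcribe the argument of Appendix~\ref{appendix:constructionTime}. Given a concrete data set $(\hat X,\hat Y)$, because $x^d,y^d$ are already binary there is no rounding to perform: for each $d$ there is exactly one element of the local vertex set $\mathcal{F}_d$ whose $(x,y)$-coordinates equal $(\hat x^d,\hat y^d)$, and setting $\lambda_v=0$ for every $v\in\mathcal{F}_d$ whose coordinates differ from the data turns a collection of the defining inequalities $\lambda_v\ge 0$ into equalities, cutting out a face $\mathcal{F}_{\hat X,\hat Y}$ of $P$. Minimizing the linear objective $\frac{1}{D}\sum_d L_d$ over this face then solves the BiU ERM problem exactly (equivalently, by Farkas' lemma, one perturbs the objective instead of fixing variables). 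The complexity claims follow as before: since all bags $Q_d$ are identical copies, the sets $\mathcal{F}_d$ need to be enumerated only once, at a cost of $O(2^{p|V(\mathcal{G})|})$ evaluations of $f$ and $\ell$, after which the LP is assembled in time $O(2^{p|V(\mathcal{G})|}D)$, dominated by creating the $\lambda$-variables.

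The main obstacle I anticipate is bookkeeping rather than conceptual: one has to verify carefully that the node-output variables $s^d_v$ really can be encoded through a function-value oracle (the \emph{strict} inequality in the definition of a BiU causes no difficulty, since the oracle only returns the value), and, more importantly, that the per-bag variable count is genuinely $O(p|V(\mathcal{G})|)$ --- i.e., that the input and output dimensions $n,m$ are absorbed into $|V(\mathcal{G})|$ and that no hidden coupling forces additional variables into a common bag. With the approximation error of the general construction set to zero, the remainder is a direct transcription of the proof of Main Theorem~\ref{maintheorem:ERM}.
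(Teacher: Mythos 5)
Your proposal is correct and follows essentially the same route as the paper: the paper's proof likewise applies Theorem~\ref{genbtheorem} directly to the epigraph formulation with $x,y$ kept as (binary) variables, notes that no discretization is needed, and inherits the face/construction-time arguments from the approximate setting; your explicit per-node output variables and bag-size bookkeeping are just a spelled-out version of that same construction (your intermediate count of ``at most $|V(\mathcal{G})|$ data variables per bag'' should really be $O(p|V(\mathcal{G})|)$ since the input dimension can be up to $p$ times the number of first-layer nodes, but this does not affect the final width bound).
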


The following corollary is immediate.

\begin{corollary}
The ERM problem \eqref{eq:erm} over BiUs can be solved in polynomial time for any $D$, whenever $p$ and the network structure $\mathcal{G}$ are fixed.
\end{corollary}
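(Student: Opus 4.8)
The plan is to read the corollary off directly from the preceding theorem, the only work being to observe that fixing $p$ and $\mathcal{G}$ collapses the factor $2^{p|V(\mathcal{G})|}$ to an absolute constant, so that every quantity appearing in that theorem's conclusion becomes polynomial --- in fact linear --- in $D$, and to check that ``polynomial time'' then holds in the genuine (bit-complexity) sense.

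First I would fix $p$ and the network graph $\mathcal{G}$ and set $C \doteq 2^{p|V(\mathcal{G})|}$, a constant. By the preceding theorem there is a data-independent polytope $P$ described by a linear program of size $O(CD) = O(D)$, constructible in time $O(CD) = O(D)$ plus $O(C) = O(1)$ evaluations of $f$ and $\ell$; since the network is fixed, each such evaluation costs $O(1)$, so the whole LP can be written down in time linear in $D$. Next, given a realized data set $(\hat{X},\hat{Y}) = (\hat{x}^i,\hat{y}^i)_{i=1}^D$ of binary vectors, I would form the face $\mathcal{F}_{\hat{X},\hat{Y}}$ of $P$ by fixing the variables that encode the data to their given values (equivalently, by Farkas' lemma, by modifying the linear objective); by the theorem this face is \emph{exactly} the feasible region of the corresponding ERM instance, and --- crucially for BiUs --- no discretization error is incurred because the data and the parameters $(a,b)$ are already binary. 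Forming the face means fixing $O(D)$ variables of an LP of size $O(D)$, which is polynomial in $D$. Finally I would solve the resulting linear program with the ellipsoid method, whose running time is polynomial in the bit-size of the input \citep{grotschel2012geometric}; reading the optimal parametrization $\phi$ (the choice of $(a,b)$ at each unit) off the returned optimal vertex yields an exact minimizer of \eqref{eq:erm}, and the total running time is polynomial in $D$.

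I do not expect a genuine obstacle: the substance is entirely contained in the preceding theorem (which in turn rests on Theorem~\ref{genbtheorem}). The one point that warrants a line of care is verifying that the ellipsoid step is polynomial in the Turing-machine sense rather than merely in the number of rows and columns --- i.e., that the LP entries have polynomially bounded bit complexity. This follows from the explicit \textbf{LBO} construction in the proof of Theorem~\ref{genbtheorem} together with the fact that all BiU node computations are binary-valued, so every coefficient is an integer bounded in terms of the fixed quantities $p$ and $|V(\mathcal{G})|$; hence the bit-size of the LP is $O(D)$ up to polylogarithmic factors, and the ellipsoid method runs in time polynomial in $D$ as claimed.
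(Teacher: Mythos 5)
Your proposal is correct and takes essentially the same route as the paper, which treats the corollary as immediate from the preceding BiU theorem: with $p$ and $\mathcal{G}$ fixed the exact, data-independent LP has size $O(2^{p|V(\mathcal{G})|}D)=O(D)$, the data set selects a face (equivalently an objective modification), and a polynomial-time LP method such as the ellipsoid algorithm finishes the job. Your additional check that the coefficients have polynomially bounded bit-size is a fine but minor elaboration of what the paper leaves implicit.
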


\end{document}